\def\BibTeX{{\rm B\kern-.05em{\sc i\kern-.025em b}\kern-.08em
    T\kern-.1667em\lower.7ex\hbox{E}\kern-.125emX}}
\newcommand{\Tau}{\mathrm{T}}
\newcommand{\PreserveBackslash}[1]{\let\temp=\\#1\let\\=\temp}
\newcolumntype{C}[1]{>{\PreserveBackslash\centering}p{#1}}
\newcolumntype{R}[1]{>{\PreserveBackslash\raggedleft}p{#1}}
\newcolumntype{L}[1]{>{\PreserveBackslash\raggedright}p{#1}}
\newtheorem{proposition}{Proposition}
\newtheorem{theorem}{Theorem}
\newtheorem{corollary}{Corollary}
\newtheorem{remark}{Remark}
\newif\iftodo
\newcommand{\todo}[1]{\color{red}#1\color{black}}
\newcommand{\todo}[1]{}
\newif\ifnote
\newcommand{\note}[1]{{\color{blue}\bf {#1} \color{black}}}
\newcommand{\note}[1]{}
\newif\ifedit
\newcommand{\edit}[1]{\color{blue}#1\color{black}}
\newcommand{\edit}[1]{\color{black}#1}
\newif\ifeedit
\newcommand{\eedit}[1]{\color{blue}#1\color{black}}
\newcommand{\eedit}[1]{\color{black}#1}
\begin{document}

\title{
Beyond Inverted Pendulums: Task-optimal Simple Models of Legged Locomotion
}

\author{Yu-Ming Chen$^{1}$, Jianshu Hu$^{2}$ and Michael Posa$^{1}$ \thanks{$^{1}$The authors are with the General Robotics, Automation, Sensing and Perception (GRASP) Laboratory, University of Pennsylvania, Philadelphia, PA 19104, USA.
        {\tt\small \{yminchen, posa\}@seas.upenn.edu}}\thanks{$^{2}$The author is with the UM-SJTU Joint Institute, Shanghai Jiao Tong University, Shanghai, China.
        {\tt\small hjs1998@sjtu.edu.cn}}}

\maketitle

\setcounter{footnote}{2}

\begin{abstract}

Reduced-order models (ROM) are popular in online motion planning due to their simplicity.
\edit{
A good ROM for control captures critical task-relevant aspects of the full dynamics while remaining low dimensional. }
However, planning within the reduced-order space unavoidably constrains the full model, and hence we sacrifice the full potential of the robot.
In the community of legged locomotion, this has lead to a search for better model extensions, but many of these extensions require human intuition,
and there has not existed a principled way of evaluating the model performance and discovering new models.
In this work, we propose a model optimization algorithm that automatically synthesizes reduced-order models, optimal with respect to a user-specified distribution of tasks and corresponding cost functions.
To demonstrate our work, we optimized models for a bipedal robot Cassie. 
\edit{
We show in simulation that the optimal ROM reduces the cost of Cassie's joint torques by up to 23\% and increases its walking speed by up to 54\%.
We also show hardware result that the real robot walks on flat ground with 10\% lower torque cost.
All videos and code can be found at \\ \href{https://sites.google.com/view/ymchen/research/optimal-rom}{https://sites.google.com/view/ymchen/research/optimal-rom}.

}

\end{abstract}

\begin{IEEEkeywords}
Reduced-order models,
model optimization,
humanoid and bipedal locomotion,
optimization and optimal control,
real time planning and control
\end{IEEEkeywords}

\section{Introduction}

\begin{figure*}[t!]
 \centering
    \includegraphics[width=1.0\linewidth]{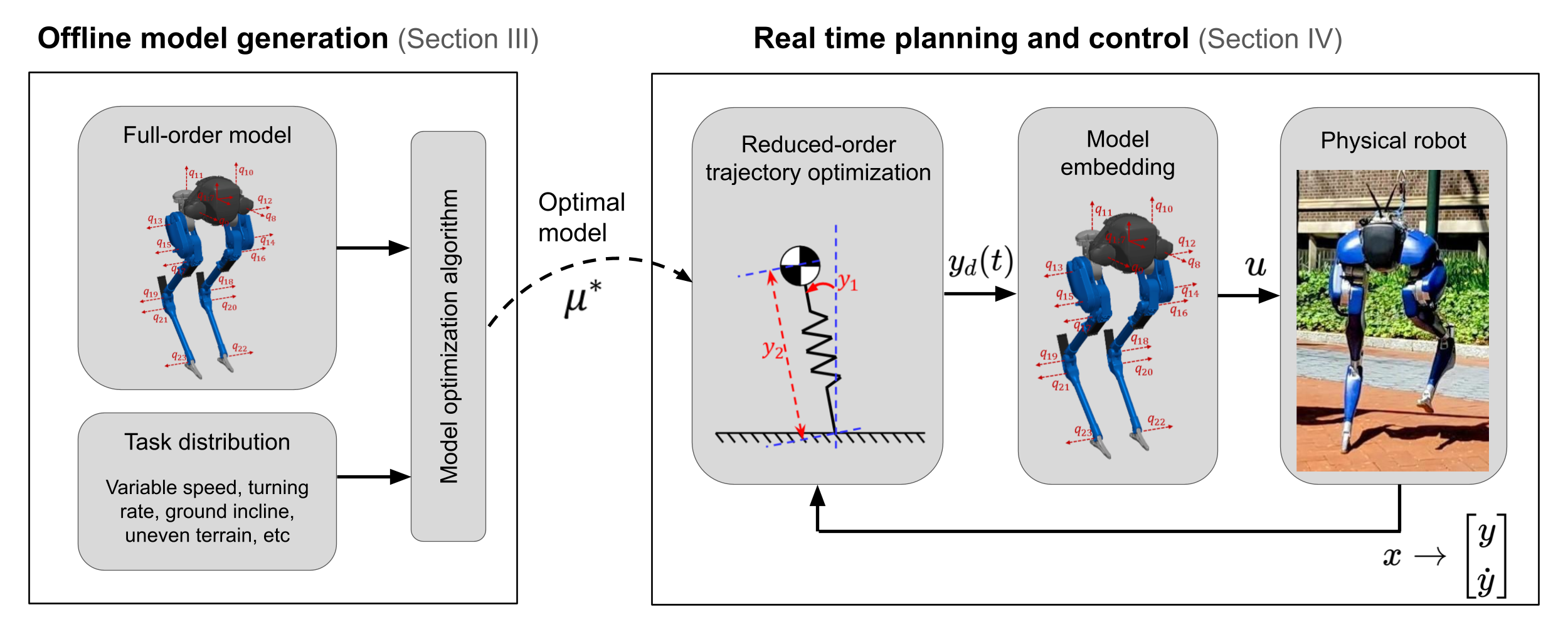}
 
 \vspace{-3mm}
 
 \caption{An outline of the synthesis and deployment of optimal reduced-order models (ROM). 
 Offline, given a full-order model and a distribution of tasks, we optimize a new model that is effective over the task space (Section \ref{sec:approaches}). 
 Online, we generate new plans for the reduced-order model and track these trajectories on the true, full-order system (Section \ref{sec:mpc}).
 This diagram also shows the bipedal robot Cassie (in the rightmost box) and its full model. 
 Cassie has five motors on each leg -- three located at the hip, one at the knee and one at the toe. 
 Additionally, there are 2 leaf springs in each leg, and the spring joints are visualized by $q_{16}$ to $q_{19}$ in the figure.
 The springs are a part of the closed-loop linkages of the legs.
 We model these linkages with distance constraints, so there are no rods visualized in the model. 
 }
\vspace{-2mm}
 \label{fig:outline}
\end{figure*}

State-of-the-art approaches to model-based planning and control of legged locomotion can be categorized into two types \cite{wensing2023optimization}.
One uses the full-order model of a robot, and the other uses a reduced-order model (ROM). 
With the full model, we can leverage our full knowledge about the robot to achieve high performance \cite{westervelt2003hybrid, zhao2017multi, reher2016realizing}. However, this comes with the cost of heavy computation load, and it also poses a challenge in formal analysis,
because modern legged robots 
have many degrees of freedom. To manage this complexity, the community of legged robots has embraced the use of reduced-order models.

Most reduced-order models adopt constraints (assumptions) on the full model dynamics \edit{while capturing the task-relevant part of the full-order dynamics}.
For example, the linear inverted pendulum (LIP) model \cite{Kajita91, Kajita01} assumes that the robot is a point mass that stays in a plane, which greatly reduces energy efficiency and limits the speed and stride length of the robot.
The spring loaded inverted pendulum (SLIP) model \cite{blickhan1989spring} is a point mass model with spring-mass dynamics, which implies zero centroidal angular momentum rate and \edit{ zero ground impacts at foot touchdown event}.
Therefore, when we plan for motions only in the reduced-space, we unavoidably impose limitations on the full dynamics. 
This restricts a complex robot's motion to that of the low-dimensional model and necessarily sacrifices performance of the robot.

The above limitations of the reduced-order models have long been acknowledged by the community, resulting in a wide array of extensions that universally rely on human intuition, and are often in the form of mechanical components (a spring, a damper, a rigid body, the second leg, etc) \cite{xiong2020dynamic, xiong2021global, kasaei2020robust, takenaka2009real, sato2010real, shimmyo2012biped, kasaei2018reliable, faraji20173lp}. 
The success of such model extensions which enable high-performance real time planning on hardware are listed as follows.
Chignoli et al. \cite{chignoli2021humanoid} used a single rigid body model and assumed small body pitch and roll angle, in order to formulate a convex planning problem.
Xiong et al. \cite{xiong2020dynamic} extended LIP with a double-support phase while maintaining the \edit{ zero ground impact }assumption, so the model is still linear and conducive to a LQR controller \cite{shaiju2008formulas, garcia1989model, borrelli2017predictive}.
Gibson et al. \cite{gong2022zero, gibson2022terrain} used the angular-momentum-based LIP which has better prediction accuracy than the traditional LIP.
Dai et al. and Herzog et al. \cite{Dai14, herzog2016structured} combined the centroidal momentum model with full robot configurations, and its real time application in planning was made possible thanks to Boston Dynamics' software engineering \cite{AtlasMPC}.

While some of the above extensions have improved the robot performance, it remains unclear which extension provides more performance improvement than the others, and we do not have a metric to improve the model performance with. 
Additionally, it has been shown that not all model extensions can significantly improve the performance of robots. 
For example, allowing the center of mass height to vary provides limited aid in the task of balancing \cite{Posa17, Koolen16}.
In our work, we aim to automatically discover the most beneficial extension of the reduced-order models by directly optimizing the models given a user-specified objective function and a task distribution.

\subsection{Related Work}

Several researchers have sought to enhance the performance of the reduced-order model by mixing it with a full model in the planning horizon of Model Predictive Control (MPC).
Li et al. \cite{li2021model} divided the horizon into two segments, utilizing a full model for the immediate part and a reduced-order model for the distant part.
\edit{
Subsequently, Khazoom et al. \cite{khazoom2023optimal} systematically determined the optimal scheduling of these two models. }
Norby et al. \cite{norby2022adaptive} blended the full and reduced-order models while adaptively switching between the two.
Our work is different from these existing works in that we directly optimize a reduced-order model instead of reasoning about scheduling two models to improve the overall model performance.

\edit{
Classical approaches to finding a ROM often minimize the error between the ROM and the full model.
Pandala et al. \cite{pandala2022robust} attempted to close this gap implicitly in a learning framework, modeling the difference between the two models as a disturbance to the reduced dynamics.
Our prior work \cite{chen2023integrable} looked for an Integrable Whole-body Orientation model
by minimizing the angular momentum error between the reduced-order and full-order model.
Yamane \cite{yamane2012systematic} linearized the full model and reduced the dimension via principal component analysis (PCA), resulting in a ROM optimally approximating the full dynamics in a neighborhood of the linearization point.
In contrast to these works, 
the approach presented in this paper leverages the fact that control on the full robot can be used to exactly embed low-dimensional models, 
and thus we judge the quality of such a model by a user-specified cost function instead of the modeling error. 
This definition and mechanism for assessing a ROM align with the state-of-the-art methods in model-based planning and control of legged robots -- plan trajectories/inputs in the ROM space first and then track these reduced-order trajectories/inputs on the robot.

}

\vspace{-3mm}

\subsection{Contributions}

The contributions of this paper are:

\begin{enumerate}
\item We propose a bilevel optimization algorithm to automatically synthesize new reduced-order models, embedding high-performance capabilities within low-dimensional representations. (This contribution was presented in the conference form \cite{chen2020optimal} of this paper.) 
\item We improve the model formulation of the prior work \cite{chen2020optimal}, and improve the algorithm efficiency by using the Envelope Theorem to derive the analytical gradient of an optimization problem.
We provide more examples of model optimization, with different sizes of task space and basis functions.
\item We design a real time MPC controller for the optimal ROM, and demonstrate that the optimal model is capable of achieving higher performance in both simulation and hardware experiment of a bipedal robot Cassie. 
\item We evaluate and compare the performances of reduced-order models in both simulation and hardware experiments. 
We analyze the performance gain and discuss the lessons learned in translating the model performance of an open-loop system to a closed-loop system.
\end{enumerate}

\vspace{-3mm}

\subsection{Organization}
The paper is organized as follows.
Section \ref{sec:background} introduces the models of the Cassie robot and the background for this paper.
Section \ref{sec:approaches} introduces our definition of a reduced-order model, formulates the model optimization problem, provides an algorithm that solves the problem, and finally demonstrates model optimization with a few examples.
Section \ref{sec:mpc} introduces an MPC for a specific class of ROMs used in Section \ref{sec:closed_loop_eval}.
Section \ref{sec:closed_loop_eval} compares and analyzes the performance improvement in trajectory optimization, in simulation and in hardware.
Section \ref{sec:mpc_generic_rom} discusses the hybrid nature of legged robot dynamics and introduces the MPC for a general ROM.
Finally, we discuss some of the lessons learned during the journey of realizing better performance on the robot in Section \ref{sec:discussion}, and conclude the paper in Section \ref{sec:conclusion}.
\edit{
The link to all videos and code for the examples is provided in the Abstract. }

\vspace{-1mm}

\section{Background}\label{sec:background}

\subsection{Reduced-order Models of Legged Locomotion}

Modern legged robots like the Agility Robotics Cassie have many degrees of freedom and may incorporate passive dynamic elements such as springs and dampers.
To manage this complexity and simplify the design of planning and control, reduced-order models have been widely adopted in the research community.

One observation, common to many approaches, lies in the relationship between foot placement, ground reaction forces, and the center of mass (CoM) \cite{Full1999}.
While focusing on the CoM neglects the individual robot limbs, controlling the CoM position has proven to be an excellent proxy for the stability of a walking robot.
CoM-based simple models include the LIP \cite{Kajita91,Kajita01}, SLIP \cite{blickhan1989spring}, hopping models \cite{Raibert1984}, inverted pendulums \cite{Garcia1998,Schwab2001}, and others.
Since these models are universally low-dimensional, they have enabled a variety of control synthesis and analysis techniques that would not otherwise be computationally tractable.
For example, numerical methods have been successful at finding robust gaits and control designs \cite{Byl2009,OguzSaglam2014, Kelly2015, Koolen12}, and assessing stability \cite{Pratt06}.

\edit{
Many of the aforementioned reduced-order models feature massless legs, eliminating any foot-ground impact during the swing foot touchdown event.
When dealing with a robot or a model incorporating a foot of non-negligible mass, zero impacts necessitate zero swing foot velocity at touchdown.
This constraint ensures the velocity continuity before and after the touchdown event.
}

A popular approach to using reduced-order models on legged robots is to first plan with the ROM to get desired ROM trajectories and desired foot steps, and then use a low-level controller to track the planned trajectories.
This workflow is depicted in the right half of Fig. \ref{fig:outline}.
As planning speed improves (e.g., through model complexity reduction), 
real time solving of the planning problem with a receding horizon (MPC) becomes feasible.

\vspace{-2mm}

\subsection{Models of Cassie}\label{sec:cassie_model}

The bipedal robot Cassie (Fig. \ref{fig:outline}) is the platform we used to test our model optimization algorithm. 
Here we briefly introduce its model. 
Let the state of Cassie be $x = [q, v] \in \mathbb{R}^{45}$ where $q \in \mathbb{R}^{23}$ and $v \in \mathbb{R}^{22}$ are generalized position and velocity, respectively. 
We note that $q$ and $v$ have different dimensions, because the floating base orientation is expressed via quaternion. 
The conversion between  $\dot{q}$ and $v$ depends only on $q$ \cite{wie1985quaternion}. 
The standard equations of motion are
\vspace{-1mm}
\begin{equation}\label{eq:eom}
M(q) \dot{v} = f_{cg} (q,v) + Bu + J_h(q)^T \lambda + \tau_{app}(q,v) 
\vspace{-1mm}
\end{equation}
where $M$ is the mass matrix which includes the reflected inertia of motors, 
$f_{cg}$ contains the velocity product terms and the gravitational term, 
$B$ is the actuation selection matrix, 
$u$ is the actuator input, 
$J_h$ is the Jacobian of holonomic constraints associated with the constraint forces $\lambda$, 
and $\tau_{app}$ includes the other generalized forces applied on the system such as joint damping forces. \edit{
The forces $\lambda$ contain ground contact forces and constraint forces internal to the four-bar linkages of Cassie.
In simulation, the ground forces are calculated by solving an optimization problem based on the simulator's contact model \cite{Drake2016}.
In trajectory optimization, the forces $\lambda$ are solved simultaneously with $x$ and $u$ while satisfying the dynamics, holonomic and friction cone constraints.
}
Furthermore, we assume the swing foot collision with the ground during walking is perfectly inelastic \edit{in the trajectory optimization}, so the robot dynamics is hybrid.  
Combining the discrete impact dynamics (from foot collision) with Eq. \eqref{eq:eom}, we derive the hybrid equations of motion 
\edit{
\begin{equation}\label{eq:hybrid_eom}
    \begin{cases}
    
\begin{alignedat}{2}
\dot{x} &= f\left(x, u, \lambda\right),& \hspace{5mm} &x^- \not\in S \\
x^+ &= \Delta(x^-,\Lambda),& &x^- \in S
\end{alignedat}
    
    \end{cases}
\vspace{-2mm} 
\end{equation}
}
where $x^-$ and $x^+$ are pre- and post-impact state, 
$\Lambda$ is the impulse of swing foot collision,
\edit{$f$ is the continuous-time dynamics, }
$\Delta$ is the discrete mapping at the touchdown event, 
and $S$ is the surface in the state space where the event must occur \cite{hurmuzlu1994rigid, grizzle2014models}.

Cassie's legs contain four four-bar linkages -- two around the shin links and the other two around the tarsus links.
We simplify the model by lumping the mass of the rods of the tarsus four-bar linkages into the toe bodies, while the shin linkages are modeled with fixed-distance constraints. 
To simplify the model further, we assume Cassie's springs are infinitely stiff (or equivalently no springs), in which case $q \in \mathbb{R}^{19}$ and $v \in \mathbb{R}^{18}$. 
This assumption has been successfully deployed by other researchers \cite{hereid2018rapid}, and it is necessary for the coarse integration steps in the trajectory optimization\footnote{Reher et al. \cite{reher2019dynamic} showed 7 times increase in solve time when using the full Cassie model (with spring dynamics) in trajectory optimization.
Additionally, Cassie's spring properties can change over time and are hard to identify accurately, which discourages the use of the spring model on Cassie.} in Section \ref{sec:trajopt_background}.
\edit{
We also use this assumption in Section \ref{sec:closed_loop_eval} when comparing the ROM performances between the trajectory optimization and simulation.}

\vspace{-1mm}

\subsection{Trajectory Optimization} \label{sec:trajopt_background}
This paper will heavily leverage trajectory optimization within the inner loop of a bilevel optimization problem.
We briefly review it here, but the reader is encouraged to see \cite{Betts01} for a more complete description.
Generally speaking, trajectory optimization is a process of finding state $x(t)$ and input $u(t)$ that minimize some measure of cost $h$ while satisfying a set of constraints $C$.
Following the approach taken in prior work \cite{Posa13, Posa2016}, we explicitly optimize over state, input, and constraint (contact) forces $\lambda(t)$,
\vspace{-2mm}
\begin{equation}\label{eq:trajopt}
    \begin{array}{cl}
     \underset{x(t),u(t),\lambda(t)}{\text{min}} & \displaystyle\int_{t_0}^{t_f} h(x(t),u(t)) dt  \\
     \text{s.t.} & \dot{x}(t) = f(x(t),u(t),\lambda(t)), \\
      & C (x(t),u(t),\lambda(t)) \leq 0,  \\
    \end{array}
\end{equation}
where $f$ is the dynamics of the system, $\lambda$ are the forces required to satisfy holonomic constraints \edit{(inside $C\leq 0$)}, and $t_0$ and $t_f$ are the initial and the final time respectively.
Standard approaches  discretize in time, formulating \eqref{eq:trajopt} as a finite-dimensional nonlinear programming problem.
For the purposes of this paper, any such method would be appropriate, while we use DIRCON \cite{Posa2016} to address the closed kinematic chains of the Cassie robot.
DIRCON transcribes the infinite dimensional problem in Eq. (\ref{eq:trajopt}) into a finite dimensional nonlinear problem 
\vspace{-2mm}
\begin{equation}\label{eq:disc_trajopt}
    \begin{array}{cl}
     \underset{w}{\text{min}} & \displaystyle\sum_{i=1}^{n-1} \frac{1}{2} \big( h(x_i,u_i) + h(x_{i+1},u_{i+1}) \big) \delta_i  \\
     \text{s.t.} & f_c(x_i,x_{i+1},u_i,u_{i+1},\lambda_i,\lambda_{i+1},\delta_i,\alpha_i)=0, \\
      & \hspace{45mm} i=1,...,n-1\\
      & C (x_i,u_i,\lambda_i) \leq 0, \hspace{19mm} i=1,...,n\\
    \end{array}
\vspace{-2mm}
\end{equation}
where $n$ is the number of knot points, 
$f_c$ is the collocation constraint for dynamics, 
\edit{
$C \leq 0$ contains all the other constraints such as the four-bar-linkage kinematic constraints, }
\edit{$\delta_i$ is a constant time interval between knot point $i$ and $i+1$, }
and the decision variables are
\edit{
\begin{equation*}
\begin{alignedat}{2}
  \hspace{5mm} w = [x_1,...,x_n,&u_1,...,u_n, \lambda_1,...,\lambda_n, \alpha_1, ..., \alpha_{n-1}] \in \mathbb{R}^{n_w},
\end{alignedat}
\end{equation*}
}
where $\alpha_1, ..., \alpha_{n-1}$ are slack variables specific to DIRCON.
Eq. \eqref{eq:disc_trajopt} uses the trapezoidal rule to approximate the integration of the running cost in Eq. \eqref{eq:trajopt}, simplifying the selection of decision variables at knot points for evaluating the function $h$.

A large-scale nonlinear optimization problem such as \eqref{eq:disc_trajopt} can be difficult to solve. 
To improve the convergence of the optimization, we manually scale the decision variables, constraints and the cost function, and we also add regularization terms (see Appendix \ref{sec:heuristics_in_trajopt} for details).

\subsection{Bilevel Optimization}\label{sec:bilevel_optimization_intro}

Since our  formulation can be broadly categorized as bilevel optimization  \cite{bracken1973mathematical}, we here briefly review its basics.  

The basic structure of our bilevel optimization problem is written as 
\begin{equation}
\begin{aligned}
    \min_{\theta} & \left[ \sum_{i} \min_{w} \Psi_i(w, \theta) \right]
\end{aligned}
\end{equation}
The goal is to minimize the outer-level objective function $\sum_{i} \Psi_i(w_i^*(\theta), \theta)$ with respect to $\theta$, where $w_i^*(\theta)$ is obtained by minimizing the inner-level objective $\Psi_i(w, \theta)$ parameterized by $\theta$. 
Bilevel optimization has recently been used  in  various applications such as meta-learning \cite{franceschi2018bilevel}, reinforcement learning \cite{rajeswaran2020game},  robotics \cite{jin2022learning,pfrommer2021contactnets}, etc. 

Solving a bilevel program is generally NP-hard \cite{sinha2017review}. 
There are two types of methods  to approach  bilevel optimization.
The first type is  constraint-based \cite{hatz2012estimating, shi2005extended}, where the key idea is to replace the inner level optimization with its optimality condition (such as the KKT conditions \cite{kuhn1951}), and finally solve a ``single-level'' constrained optimization. 
However, those  methods are difficult to apply to the problem of this paper, because our inner-level is a trajectory optimization, and  replacing it with its optimality condition will introduce a large number of dual variables and co-states, dramatically increasing the size of the single-level optimization.
The second type is gradient-based \cite{jin2020pontryagin,domke2012generic}. 
The idea  is  to maintain and solve the inner-level optimization, and then update the outer-level decision variable by  differentiating through the inner-level solution  using graph-unrolling approximation \cite{domke2012generic,das2021model} or implicit function theorem \cite{krantz2002implicit}.  
Compared to  constraint-based methods, gradient-based methods maintain the bilevel structure and make   bilevel optimization more  tractable and efficient to solve.

In this paper, we use the Envelope Theorem \cite{afriat1971theory, takayama1985mathematical} and exploit the fact that our problem uses the same objective functions in the outer level and the inner level.
This structure enables us to develop a more efficient gradient-based method (the second type) to solve our problem. 
Specifically, the gradient of the outer-level objective does not require differentiating the solution of the inner-level optimization with respect to the parameters.  
This leads to two numerical advantages of our method over existing  gradient-based methods. 
First, our method bypasses the computationally intensive implicit theorem, which requires the  inverse of Hessian of the inner-level  optimization. 
Second, our method leverages the inner-loop solver's understanding of active and inactive constraints, avoiding implementing the algorithm ourselves and avoiding tuning parameters such as the active set tolerance.

\section{Model Optimization}\label{sec:approaches}

\begin{figure}[t]
 \centering
 \includegraphics[width=1\linewidth]{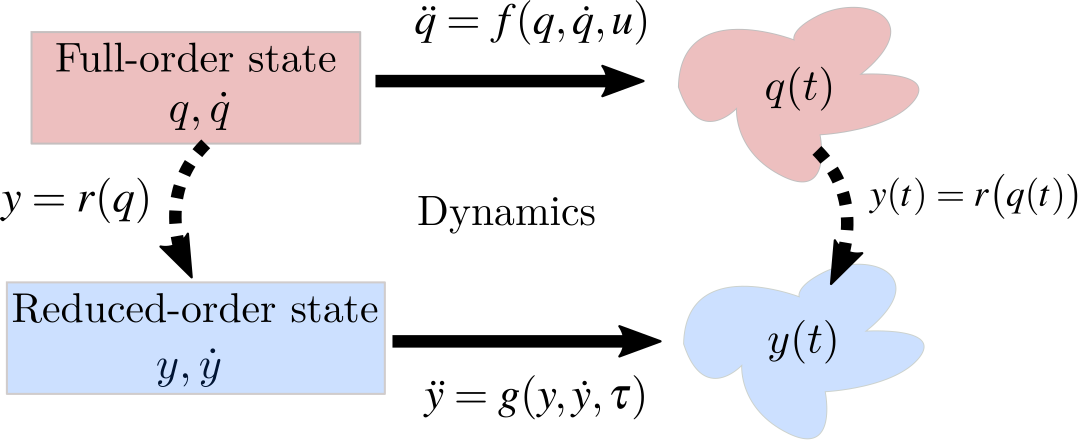}  \vspace{-5mm}
 \caption{  
 Relationship of the full-order and reduced-order models. The generalized positions $q$ and $y$ satisfy the embedding function $r$ for all time, and the evolution of the velocities $\dot{q}$ and $\dot{y}$ respects the dynamics $f$ and $g$, respectively.}
 \label{fig:mapping}
\end{figure}

In this section, we propose a definition of reduced-order models, along with a notion of quality (or cost) for such models.
We then introduce a bilevel optimization algorithm to optimize within our class of models.

\subsection{Definition of Reduced-order Models}
\label{sec:definition}
Let $q$ and $u$ be the generalized position and input of the full-order model, and let $y$ and $\tau$ be the generalized position and input of the reduced-order model.
We define a reduced-order model $\mu$ of dimension $n_y$ by two functions -- an embedding function $r: q \mapsto y$ and the second-order dynamics of the reduced-order model $g(y, \dot y, \tau)$.
That is, 
\vspace{-0mm}
\begin{equation}\label{eq:model_def}
\begin{aligned}
\mu & \triangleq (r,g),
\end{aligned}
\vspace{-2mm}
\end{equation}
with 
\vspace{-2mm}
\begin{subequations}\label{eq:model}
\begin{align}
y&=r(q),  \label{eq:model_kin}\\
\ddot{y}&= g(y,\dot{y},\tau), \label{eq:model_dyn}
\vspace{-2mm}
\end{align}
\end{subequations}
where $\dim y < \dim q$ and $\dim \tau \leq \dim u$. 
As an example, to represent SLIP, $r$ is \edit{the center of mass position relative to the foot}, $g$ is the spring-mass dynamics, and $\dim \tau = 0$ as SLIP is passive.
\edit{
Additionally, we note that the choices of $r$ and $g$ are independent of each other.
For example, LIP and SLIP share the same $r$, but they have different dynamics function $g$ (one has zero vertical acceleration and the other is the spring-mass dynamics). }

The embedding function $r$ can explicitly include the left or right leg of the robot (e.g. choosing left leg as support leg instead of right leg), in which case there will be two reduced-order models.
In this paper, we assume to parameterize over left-right symmetric reduced-order models. 
As such, we explicitly optimize over a model corresponding to left-support, which will then be mirrored to cover both left and right-support phases. 
The details of this mirroring operation can be found in Appendix \ref{sec:mirror_rom_appendix}.

Fig. \ref{fig:mapping} shows the relationship between the full-order and the reduced-order models. 
If we integrate the two models forward in time with their own dynamics, the resultant trajectories will still satisfy the embedding function $r$ at any time in the future.

\subsection{Problem Statement}
As shown in the left half of Fig. \ref{fig:outline}, the goal is to find an optimal model $\mu^*$, given a distribution $\Gamma$ over a set of tasks. 
The distribution could be provided \textit{a priori} or estimated via the output of a higher-level motion planner.
The tasks might include anything physically achievable by the robot, such as walking up a ramp at different speeds, turning at various rates, jumping, running with a specified amount of energy, etc.
The goal, then, is to find a reduced-order model that enables low-cost motion over the space of tasks,
\begin{equation}\label{eq:model_opt}
    \mu^* = \underset{\mu \in \boldsymbol{M}}{\text{argmin}} \  \mathbb{E}_{\gamma} \left[ \mathcal{J}_{\gamma}(\mu) \right], 
\end{equation}
where $\boldsymbol{M}$ is the model space, $\mathbb{E}_\gamma$ takes the expected value over $\Gamma$, and $\mathcal{J}_{\gamma}(\mu)$ is the cost required to achieve the tasks $\gamma \sim \Gamma$ while the robot is restricted to a particular model $\mu$. 

With our model definition in Eq. (\ref{eq:model_def}), the problem in Eq. \eqref{eq:model_opt} is infinite dimensional over the space of embedding and dynamics functions, $r$ and $g$.
To simplify, we parametrize  $r$ and  $g$ with basis functions $\{\phi_{e,i} \mid i = 1, \ldots, n_e\}$ and $\{\phi_{d,i} \mid i = 1, \ldots, n_d\}$ with linear weights $\theta_e \in \mathbb{R}^{n_y \cdot n_e}$ and $\theta_d \in \mathbb{R}^{n_y \cdot n_d}$.
Further assuming that the dynamics are affine in $\tau$ with constant multiplier,  $r$ and $g$ are given as
\begin{subequations}\label{eq:parametrized_rom}
\begin{align}
  y&=r(q;\theta_e) \hspace{7.2mm} = \Theta_e \phi_e (q), \label{eq:parametrized_rom_1}\\
  \ddot{y}&= g(y,\dot{y},\tau; \theta_d)=  \Theta_{d} \phi_{d} (y,\dot{y}) + B_y\tau, \label{eq:parametrized_rom_2}
\end{align}
\end{subequations}
where $\Theta_e \in \mathbb{R}^{n_y \times n_e}$ and $\Theta_d \in \mathbb{R}^{n_y \times n_d}$ are $\theta_e$ and $\theta_d$ arranged as matrices, 
$\phi_e = [\phi_{e,1},\ldots,\phi_{e,n_e}]$, $\phi_d = [\phi_{d,1},\ldots,\phi_{d,n_d}]$, 
and $B_y \in \mathbb{R}^{n_y \times n_\tau}$.
\edit{
For simplicity, we choose a constant value for $B_y$. 
Observing that physics-based rigid-body models lead to state-dependent values for $B_y$, one can also extend this method by parameterizing $B_y(y, \dot{y})$.
}
Moreover, while we choose linear parameterization here, any differentiable function approximator (e.g. a neural network) can be equivalently used.

Let the model parameters be $\theta = [\theta_e, \theta_d] \in \mathbb{R}^{n_t}$. 
Eq. (\ref{eq:model_opt}) can be rewritten as  
\begin{equation}\label{eq:parametrized_model_opt}
    \theta^* = \underset{\theta}{\text{argmin}} \  \mathbb{E}_{\gamma} \left[ \mathcal{J}_{\gamma}(\theta) \right]. 
    \tag{O}
\end{equation}
From now on, we work explicitly in $\theta$, rather than $\mu$.
\edit{
As we will see in the next section, $\mathcal{J}_{\gamma}(\theta)$ is an optimal cost of a trajectory optimization problem, making Eq. \eqref{eq:parametrized_model_opt} a bilevel optimization problem.
Additionally, given the parameterization in Eq. \eqref{eq:parametrized_rom}, the ROM dimension $n_y$ is fixed during the model optimization.
}

\subsection{Task Evaluation}\label{sec:task_eval}
We use trajectory optimization to evaluate the task cost $\mathcal{J}_{\gamma}(\theta)$.
Under this setting, the tasks $\gamma$ are defined by a cost function $h_\gamma$ and task-specific constraints $C_\gamma$. 
$\mathcal{J}_{\gamma}(\theta)$ is the optimal cost to achieve the tasks while simultaneously respecting the embedding and dynamics given by $\theta$.
We note that the cost function $h_\gamma$ is a function of the full model, although we occasionally refer to the cost evaluated by this function as the ROM performance because the ROM is embedded in the full model.

The resulting optimization problem is similar to (\ref{eq:disc_trajopt}), but contains additional constraints and decision variables for the reduced-order model embedding,
\begin{equation}\label{eq:model_trajopt_all_symbols}
\begin{array}{rl}
    \hspace{-2mm} \mathcal{J}_{\gamma}(\theta) \triangleq \ \underset{w}{\text{min}} \hspace{-2mm} & \displaystyle\sum_{i=1}^{n-1} \frac{1}{2} \big( h_\gamma(x_i,u_i) + h_\gamma(x_{i+1},u_{i+1}) \big) \delta_i  \\
      \text{s.t.} \hspace{-1.5mm} & f_c(x_i,x_{i+1},u_i,u_{i+1},\lambda_i,\lambda_{i+1},\delta_i,\alpha_i)=0, \\
      & \hspace{38.7mm} i=1,\ldots,n-1 \\
      & g_c\left(x_i,u_i,\lambda_i,\tau_i;\theta\right) = 0, \hspace{4.2mm} i=1,\ldots,n \\
      & C_\gamma (x_i,u_i,\lambda_i) \leq 0, \hspace{11.2mm} i=1,\ldots,n \\
\end{array}
\end{equation}
where $f_c$ and $g_c$ are dynamics constraints for the full-order and reduced-order dynamics, respectively. 
The decision variables are $w~=~[x_1,...,x_n,$ $u_1,...,u_n,$ $ \lambda_1,...,\lambda_n,$ $ \tau_1,...,\tau_n,$  $\alpha_1, ..., \alpha_{n-1}],$
noting the addition of $\tau_i$.

The formulation of dynamics and holonomic constraints of the full-order model are described in \cite{Posa2016}, while the reduced-order constraint $g_c$ is 

\begin{equation}\label{eq:gc_constraint}
\begin{alignedat}{2}
& g_c = \ddot{y}_i - g(y_i,\dot{y}_i,\tau_i; \theta_d) = 0 \\
\Rightarrow \ & g_c =  J_i\dot{v}_i + \dot{J}_i v_i - g(y_i,\dot{y}_i,\tau_i; \theta_d) = 0 \\
\end{alignedat}
\end{equation}
where
\edit{
\begin{equation*}
\begin{alignedat}{2}
y_i & =   r (q_i;\theta_e), 
\ \ \ \dot{y}_i =  \frac{\partial r (q_i;\theta_e)}{\partial q_i} \dot{q}_i, 
\ \ \ J_i =  \frac{\partial r (q_i;\theta_e)}{\partial q_i} , \text{\ \ \ and}\\
\dot{v}_i & = M(q_i)^{-1} \left( f_{cg} (q_i,v_i) + Bu_i + J_h(q_i)^T \lambda_i + \tau_{app}(q_i,v_i) \right).
\end{alignedat}
\end{equation*}
}
The constraint $g_c = 0$ not only explicitly describes the dynamics of the reduced-order model but also implicitly imposes the embedding constraint $r$ via the variables $y$ and $\dot{y}$.
Therefore, the problem \eqref{eq:model_trajopt_all_symbols} is equivalent to simultaneous optimization of full-order and reduced-order trajectories that must also be consistent with the embedding $r$.

\edit{
For readability, we rewrite Eq. \eqref{eq:model_trajopt_all_symbols} as 
\begin{equation}\label{eq:model_trajopt}
\begin{array}{rl}
    \hspace{-2mm} \mathcal{J}_\gamma(\theta) = \ \underset{w}{\text{min}} \hspace{-2mm} & \tilde{h}_\gamma(w)  \\
      \text{s.t.} \hspace{-1.5mm} & \tilde{f}_\gamma(w, \theta) \leq 0, \\
\end{array}
\tag{TO}
\end{equation}
where $\tilde{h}_\gamma$ is the cost function of Eq. \eqref{eq:model_trajopt_all_symbols} and $\tilde{f}_{\gamma} \leq 0$ encapsulates all the constraints in Eq.  \eqref{eq:model_trajopt_all_symbols}.
In Section \ref{sec:closed_loop_eval}, we will use $\tilde{h}_\gamma$ to evaluate both the open-loop and closed-loop performance.
}

\edit{
\begin{remark}
Model optimization can change the physical meaning of a ROM. Regardless, if $J_iM^{-1}B$ (which maps a full model input $u$ to a reduced-order acceleration $\ddot{y}_i$) has full row rank, the ROM can be exactly-embedded into the full model.
\end{remark}
}

\subsection{Bilevel Optimization Algorithm}

Since there might be a large or infinite number of tasks $\gamma \sim \Gamma$ in Eq. \eqref{eq:parametrized_model_opt}, solving for the exact solution is often intractable. 
Therefore, we use stochastic gradient descent to solve Eq. \eqref{eq:parametrized_model_opt} (specifically in the outer optimization, as opposed to the inner trajectory optimization).
That is, we sample a set of tasks from the distribution $\Gamma$ and optimize the averaged sample cost over the model parameters $\theta$. 

The full approach to (\ref{eq:parametrized_model_opt}) is outlined in Algorithm \ref{alg:model_opt}. 
Starting from an initial parameter seed $\theta_0$, $N$ tasks are sampled, and the cost for each task $\mathcal{J}_{\gamma_j}(\theta)$ is evaluated by solving the corresponding trajectory optimization problem (\ref{eq:model_trajopt}). 

To compute the gradient  $\nabla_\theta \left[ \mathcal{J}_{\gamma_j}(\theta) \right]$, we previously \cite{chen2020optimal} adopted an approach based in sequential quadratic programming. 
It introduced extra parameters (e.g. tolerance for determining active constraints) and required solving a potentially large and ill-conditioned system of linear equations which can take minutes to solve to good accuracy.
Here, we take a new approach where we apply the Envelope Theorem and directly derive the analytical gradient $\nabla_\theta \left[ \mathcal{J}_{\gamma_j}(\theta) \right]$ shown in Corollary \ref{thm:env} (also see Section \ref{sec:bilevel_optimization_intro}).

\edit{

\begin{proposition}[Differentiability Condition \cite{jin2021safe}] \label{prop:diffCondition}
Assume $\tilde{h}$ and $\tilde{f}$ are continuously differentiable functions, 
and consider an optimization problem 
\begin{equation}\label{eq:model_trajopt_with_simple_notation}
\begin{array}{rl}
    \hspace{-2mm} \tilde{\mathcal{J}}(\theta) = \ \underset{w}{\text{min}} \hspace{-2mm} & \tilde{h}(w, \theta)  \\
      \text{s.t.} \hspace{-1.5mm} & \tilde{f}(w, \theta) \leq 0, \\
\end{array}
\end{equation}
where $\tilde{\mathcal{J}}(\theta)$ is the optimal cost of the problem.
Let $w^*(\theta)$ be the optimal solution to Eq. \eqref{eq:model_trajopt_with_simple_notation}.
$w^*$ is differentiable with respect to $\theta$ if the following conditions hold:
\begin{enumerate}
\item the second-order optimality condition for Eq. \eqref{eq:model_trajopt_with_simple_notation}, 
\item linear independence constraint qualification (LICQ), and 
\item strict complementarity at $w^*$.
\end{enumerate}
\end{proposition}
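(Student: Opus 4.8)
The plan is to prove this by the classical sensitivity-analysis argument, applying the implicit function theorem \cite{krantz2002implicit} to the Karush--Kuhn--Tucker (KKT) system \cite{kuhn1951} of Eq. \eqref{eq:model_trajopt_with_simple_notation}. First I would write the KKT conditions: introducing a multiplier vector $\mu$ for the inequality constraints, these are the stationarity condition $\nabla_w \tilde{h}(w,\theta) + \nabla_w \tilde{f}(w,\theta)^T \mu = 0$, primal feasibility $\tilde{f}(w,\theta) \leq 0$, dual feasibility $\mu \geq 0$, and complementary slackness $\mu_j \tilde{f}_j(w,\theta) = 0$. Under LICQ the multiplier $\mu^*$ associated with $w^*$ is unique, so $(w^*, \mu^*)$ is a well-defined reference point at the nominal parameter $\theta$.

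Second, I would use strict complementarity to freeze the active set. Let $A$ denote the indices of the constraints active at $(w^*, \theta)$. Strict complementarity gives $\mu_j^* > 0$ for every $j \in A$ and $\tilde{f}_j(w^*,\theta) < 0$ for every $j \notin A$; by continuity of $\tilde{f}$ and of the solution branch, both strict inequalities persist in a neighborhood, so the active set is locally constant. This lets me discard the inactive constraints (whose multipliers stay zero) and replace the full KKT system by the square nonlinear equation
\[
F(w, \mu_A, \theta) \triangleq \begin{bmatrix} \nabla_w \tilde{h}(w,\theta) + \nabla_w \tilde{f}_A(w,\theta)^T \mu_A \\ \tilde{f}_A(w,\theta) \end{bmatrix} = 0,
\]
where $\tilde{f}_A$ collects the active constraint functions and $\mu_A$ the corresponding multipliers.

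Third --- and this is the crux --- I would apply the implicit function theorem to $F = 0$ to solve for $(w, \mu_A)$ as a function of $\theta$. The theorem requires the Jacobian of $F$ with respect to $(w, \mu_A)$, namely the KKT matrix
\[
K = \begin{bmatrix} \nabla^2_{ww} L & J_A^T \\ J_A & 0 \end{bmatrix},
\]
to be nonsingular at the reference point, where $J_A = \nabla_w \tilde{f}_A(w^*,\theta)$ and $L = \tilde{h} + \tilde{f}_A^T \mu_A$ is the Lagrangian. Establishing this nonsingularity is the main obstacle, and it is exactly where the three hypotheses combine: LICQ makes $J_A$ full row rank, while the second-order (sufficient) optimality condition makes $\nabla^2_{ww} L$ positive definite on the null space of $J_A$. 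A short linear-algebra lemma then shows that these two facts force $K v = 0 \Rightarrow v = 0$: writing $v = (v_w, v_\mu)$, the bottom block gives $J_A v_w = 0$, so $v_w$ lies in that null space, and the top block premultiplied by $v_w^T$ yields $v_w^T \nabla^2_{ww} L \, v_w = 0$, hence $v_w = 0$; feeding this back and using the full row rank of $J_A$ forces $v_\mu = 0$.

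Finally, with $K$ nonsingular the implicit function theorem yields a continuously differentiable map $\theta \mapsto (w^*(\theta), \mu_A^*(\theta))$ in a neighborhood of the nominal $\theta$, and in particular $w^*$ is differentiable in $\theta$. I would close by invoking the locally constant active set from the second step to certify that this differentiable branch genuinely remains the optimal solution: the inactive constraints stay strictly satisfied and the recovered multipliers stay strictly positive, so no active-set change can break differentiability on the neighborhood, and the recovered $(w^*,\mu_A^*)$ continues to satisfy the full KKT system together with the second-order condition.
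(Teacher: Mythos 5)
The paper does not prove this proposition; it is imported verbatim as a cited result from \cite{jin2021safe}, and the only proof the paper supplies in this neighborhood is the one-line proof of Corollary \ref{thm:env}. Your argument is therefore not competing with an in-paper proof but supplying the standard one, and it is essentially the classical Fiacco--McCormick sensitivity theorem: reduce the KKT system to the active constraints, show the KKT matrix is nonsingular by combining LICQ (full row rank of $J_A$) with the second-order sufficient condition (positive definiteness of $\nabla^2_{ww}L$ on $\ker J_A$, which under strict complementarity is the whole critical cone), and invoke the implicit function theorem. The linear-algebra step is correct, including the part you only sketch: from $J_A v_w=0$ and $v_w^T\nabla^2_{ww}L\,v_w + v_w^T J_A^T v_\mu=0$ the cross term vanishes, giving $v_w=0$, and then $J_A^T v_\mu=0$ with $J_A^T$ injective gives $v_\mu=0$. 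Two caveats are worth recording. First, your step two invokes ``continuity of the solution branch'' to freeze the active set before that branch has been constructed; the clean ordering is to define $F$ from the active set at the nominal point, obtain the $C^1$ branch from the IFT, and only then verify a posteriori (as you do in your closing paragraph) that strict complementarity and strict inactivity persist, so the branch solves the full KKT system and the second-order condition survives. Second, the proposition as stated assumes only that $\tilde h$ and $\tilde f$ are continuously differentiable, whereas your proof (and the cited theorem) needs them twice continuously differentiable so that $\nabla^2_{ww}L$ exists and $F$ is $C^1$; relatedly, the conclusion is intrinsically about a local minimizer branch, so identifying it with ``the'' optimal solution of a nonconvex trajectory optimization tacitly assumes the global minimizer does not jump between basins as $\theta$ varies. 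Neither caveat undermines the argument as a proof of the standard sensitivity result.
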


\begin{theorem}[Envelope Theorem \cite{riley2012essential}] \label{thm:env_original}
Assume the problem in Eq. \eqref{eq:model_trajopt_with_simple_notation} satisfies the differentiability condition.
The gradient of the optimal cost $\tilde{\mathcal{J}}(\theta)$ with respect to $\theta$ is 
\begin{equation} \label{eq:gradient_by_env_thm_original}
\nabla_\theta \left[ \tilde{\mathcal{J}}(\theta) \right] = 
\frac{\partial \tilde{h}(w^*, \theta)}{\partial \theta} + 
{\lambda^{*}}^T \frac{\partial \tilde{f}(w^*, \theta)}{\partial \theta},
\end{equation}
where $\lambda^*$ is the dual solution to Eq. \eqref{eq:model_trajopt_with_simple_notation}.
\end{theorem}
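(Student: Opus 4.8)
The plan is to build the Lagrangian of problem \eqref{eq:model_trajopt_with_simple_notation},
\begin{equation*}
L(w,\lambda,\theta) = \tilde{h}(w,\theta) + \lambda^T \tilde{f}(w,\theta),
\end{equation*}
and to exploit the fact that at a KKT point the optimal cost coincides with the Lagrangian. Since $\tilde{\mathcal{J}}(\theta) = \tilde{h}(w^*(\theta),\theta)$ and complementary slackness gives ${\lambda^*}^T \tilde{f}(w^*,\theta) = 0$, I would first rewrite
\begin{equation*}
\tilde{\mathcal{J}}(\theta) = L\big(w^*(\theta), \lambda^*(\theta), \theta\big).
\end{equation*}
Differentiating the right-hand side with respect to $\theta$ by the chain rule then reduces the theorem to careful bookkeeping of three groups of terms.

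Next I would expand the total derivative,
\begin{equation*}
\nabla_\theta \tilde{\mathcal{J}}
= \frac{\partial L}{\partial w}\frac{\partial w^*}{\partial \theta}
+ \frac{\partial L}{\partial \lambda}\frac{\partial \lambda^*}{\partial \theta}
+ \frac{\partial L}{\partial \theta},
\end{equation*}
and argue that the first two groups vanish. The first group is killed by the stationarity condition $\partial L/\partial w = \partial_w \tilde{h} + {\lambda^*}^T \partial_w \tilde{f} = 0$. For the second group, note $\partial L/\partial \lambda = \tilde{f}(w^*,\theta)$: for each active constraint $\tilde{f}_i(w^*,\theta) = 0$, while for each inactive constraint strict complementarity forces $\lambda_i^* = 0$ throughout a neighborhood of $\theta$, so that $\partial \lambda_i^*/\partial\theta = 0$. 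Either way every summand $\tilde{f}_i\,\partial\lambda_i^*/\partial\theta$ is zero. What survives is the explicit partial $\partial L/\partial\theta = \partial\tilde{h}/\partial\theta + {\lambda^*}^T \partial\tilde{f}/\partial\theta$, which is exactly \eqref{eq:gradient_by_env_thm_original}.

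The main obstacle is not this algebra but justifying that the maps $w^*(\theta)$ and $\lambda^*(\theta)$ are differentiable and that the active set is locally constant — both of which the chain-rule manipulation silently assumes. Here I would invoke Proposition \ref{prop:diffCondition}: under the second-order sufficient condition, LICQ, and strict complementarity, the implicit function theorem applied to the KKT system yields continuously differentiable primal and dual solutions, and strict complementarity guarantees that active constraints stay active and inactive ones stay inactive for $\theta$ near the point of interest. The feature worth emphasizing is that the result never requires forming $\partial w^*/\partial\theta$ (which would demand inverting the Hessian of the inner problem); the envelope structure makes that sensitivity drop out, which is precisely the computational advantage claimed in Section \ref{sec:bilevel_optimization_intro}.
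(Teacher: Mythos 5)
Your argument is correct, but note that the paper does not actually prove this theorem --- it imports it by citation from \cite{riley2012essential} and only supplies a proof for Corollary \ref{thm:env} (where the first term drops because $\tilde{h}_\gamma$ in \eqref{eq:model_trajopt} does not depend on $\theta$). So there is no in-paper proof to compare against; what you have written is the standard Lagrangian/chain-rule derivation, and it is sound: stationarity kills the $\frac{\partial L}{\partial w}\frac{\partial w^*}{\partial \theta}$ block, the term-by-term analysis of $\sum_i \tilde{f}_i\,\partial\lambda_i^*/\partial\theta$ kills the dual block, and the explicit partial is exactly \eqref{eq:gradient_by_env_thm_original}. You also correctly identify that the real content lies in invoking Proposition \ref{prop:diffCondition} to get differentiability of $w^*(\theta)$ and $\lambda^*(\theta)$ and local constancy of the active set, and that the payoff is never needing $\partial w^*/\partial\theta$. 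One small imprecision: for an inactive constraint it is complementary slackness (an ordinary KKT condition), not strict complementarity, that gives $\lambda_i^*=0$; strict complementarity is what you need on the \emph{active} constraints (where $\lambda_i^*>0$) so that, together with continuity of the solution maps, the active set cannot change in a neighborhood and the KKT system remains a smooth square system to which the implicit function theorem applies. With that role assignment corrected, the proof is complete.
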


\begin{corollary} \label{thm:env}
The gradient of the optimal cost of \eqref{eq:model_trajopt} is 
\begin{equation}\label{eq:gradient_by_env_thm}
\nabla_\theta \left[ \mathcal{J}_{\gamma}(\theta) \right] = 
{\lambda^{*}}^T \frac{\partial \tilde{f}_{\gamma}(w^*, \theta)}{\partial \theta},
\end{equation}
where $w^*$ and $\lambda^*$ are respectively the primal and the dual solution to \eqref{eq:model_trajopt}.
\end{corollary}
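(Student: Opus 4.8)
The plan is to recognize that problem \eqref{eq:model_trajopt} is nothing more than a specific instance of the generic parametric program in Eq. \eqref{eq:model_trajopt_with_simple_notation}, distinguished by one structural feature: its objective $\tilde{h}_\gamma(w)$ carries no explicit dependence on the model parameters $\theta$. First I would make this identification precise by setting $\tilde{h}(w,\theta) = \tilde{h}_\gamma(w)$ and $\tilde{f}(w,\theta) = \tilde{f}_\gamma(w,\theta)$, observing that $\theta$ enters the program only through the reduced-order dynamics constraint $g_c$ (via the embedding $r(\cdot;\theta_e)$ and the dynamics $g(\cdot;\theta_d)$), which is packaged inside $\tilde{f}_\gamma$. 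The running cost $h_\gamma$, being a function solely of the full-model pairs $(x_i,u_i)$, is manifestly independent of $\theta$, and hence so is the aggregated cost $\tilde{h}_\gamma$.

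Second, I would invoke Theorem \ref{thm:env_original} directly. Assuming the differentiability condition of Proposition \ref{prop:diffCondition} holds for \eqref{eq:model_trajopt} at the optimizer $w^*$, the Envelope Theorem yields the general expression
\begin{equation*}
\nabla_\theta \left[ \mathcal{J}_{\gamma}(\theta) \right] = \frac{\partial \tilde{h}_\gamma(w^*)}{\partial \theta} + {\lambda^{*}}^T \frac{\partial \tilde{f}_{\gamma}(w^*, \theta)}{\partial \theta}.
\end{equation*}
Third, I would collapse the first term: since $\tilde{h}_\gamma$ does not depend on $\theta$, its partial derivative $\partial \tilde{h}_\gamma / \partial \theta$ vanishes identically, leaving only the dual-weighted constraint-sensitivity term. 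This is exactly Eq. \eqref{eq:gradient_by_env_thm}, completing the argument. The whole derivation is therefore a single-step specialization of the preceding theorem plus the annihilation of one term.

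The main obstacle is not the algebra — which reduces to one vanishing term — but rather justifying that the hypotheses of Theorem \ref{thm:env_original} are actually in force for \eqref{eq:model_trajopt}. The three requirements of Proposition \ref{prop:diffCondition} (second-order sufficiency, LICQ, and strict complementarity at $w^*$) are nontrivial to verify for a large-scale discretized trajectory optimization carrying contact, holonomic, and reduced-order dynamics constraints, particularly LICQ given the potentially redundant closed-kinematic-chain constraints of the Cassie model. The honest course is to inherit these as standing assumptions from the theorem's premise, noting that they hold generically and can be checked numerically at the computed solution via the solver's active-set information. I would also remark that the practical payoff of this corollary is precisely that the gradient is obtained from $w^*$ and $\lambda^*$ alone, with no need to differentiate $w^*(\theta)$ through the implicit function theorem, which is the efficiency advantage emphasized in Section \ref{sec:bilevel_optimization_intro}.
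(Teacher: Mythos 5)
Your proposal is correct and follows essentially the same route as the paper: the paper's proof likewise specializes Theorem \ref{thm:env_original} to \eqref{eq:model_trajopt} and observes that the cost $\tilde{h}_\gamma$ is independent of $\theta$, so the first term of Eq. \eqref{eq:gradient_by_env_thm_original} vanishes. Your additional remarks on verifying the hypotheses of Proposition \ref{prop:diffCondition} mirror the paper's own caveat (stated after the corollary) that the differentiability condition is assumed rather than verified and may fail in places.
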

\begin{proof}
The proof follows directly from Theorem \ref{thm:env_original}. Note that the cost function in \eqref{eq:model_trajopt} is independent of $\theta$, in which case the first term of Eq. \eqref{eq:gradient_by_env_thm_original} becomes 0.
\end{proof}

}

\begin{algorithm}[t!]
\caption{Reduced-order model optimization}
\begin{algorithmic}[1]\label{alg:model_opt}
\renewcommand{\algorithmicrequire}{\textbf{Input:}}
\renewcommand{\algorithmicensure}{\textbf{Output:}}
\REQUIRE  Task distribution $\Gamma$ and step size $\alpha$ 
\ENSURE  $\theta^*$
\\ \textit{Model initialization}
\STATE $\theta \leftarrow \theta_0$
\\ \textit{Model optimization}
\REPEAT
\STATE Sample $N$ tasks from $\Gamma$ $\Rightarrow \gamma_j, \ j = 1, ..., N$ 
\FOR {$j=1,\ldots,N$}
\STATE Solve (\ref{eq:model_trajopt}) to get $\mathcal{J}_{\gamma_j}(\theta)$
\STATE Compute $\nabla_\theta \left[\mathcal{J}_{\gamma_j}(\theta) \right]$ by Eq. \eqref{eq:gradient_by_env_thm}
\ENDFOR
\STATE Average the gradients $\Delta \theta = \frac{\sum_{j=1}^{N} \nabla_\theta \left[ \mathcal{J}_{\gamma_j}(\theta) \right]}{N}$
\STATE \textit{Gradient descent} $\theta \leftarrow \theta - \alpha\cdot \Delta \theta$
\UNTIL convergence
\RETURN $\theta$
\end{algorithmic}
\end{algorithm}

\edit{
We note that there is, in general, no guarantee on global convergence when using Eq. \eqref{eq:gradient_by_env_thm_original} in a gradient descent algorithm, 
except for simple cases where $\tilde{h}$ and $\tilde{f}$ are convex functions in $(w, \theta)$ \cite{Boyd04a}.
As for local convergence towards a stationary point, 
the gradient descent with Eq. \eqref{eq:gradient_by_env_thm_original} is guaranteed to converge with a sufficiently small step size.
While there is no guarantee that the differentiability condition in Proposition \ref{prop:diffCondition} holds everywhere (in fact we expect it to fail under certain conditions), in practice we have observed that Algorithm 1 reliably converges. Additionally, the accuracy of the gradient $\nabla_\theta \left[ \mathcal{J}_{\gamma}(\theta) \right]$ is bounded by the accuracy of the primal and dual solutions to  \eqref{eq:model_trajopt} \cite{jin2021safe}.
In practice, we observed that the gradient was accurate enough (showing local convergence behavior) with the default optimality tolerance and constraint tolerance given by solvers like SNOPT \cite{gill2005snopt}.
}

\edit{
In Algorithm \ref{alg:model_opt}, the sampled tasks can sometimes be infeasible for the trajectory optimization problem due to a poor choice in ROM or numerical difficulties when solving \eqref{eq:model_trajopt}. 
In these cases, we do not include these samples in the gradient update step. 
This is a reasonable approach as we expect that optimizing the ROM for nearby tasks simultaneously improves performance for the failed task by continuity. 
This does have the potential to break the optimization process if large regions of the task space were infeasible, but in practice we have found this sample-rejection procedure robust enough to the occasional numerical difficulties. 
}

The model optimization in Algorithm \ref{alg:model_opt} is deemed to have converged if the norm of the average gradient of the sampled costs falls below a specified threshold.
\edit{
This threshold can be set on a case by case basis, depending on the robot models, tasks, etc. 
In our experiments, we simply look at the cost-iteration plots (e.g. Fig \ref{fig:cassie_3dlipm_cost_comparison}) and terminate the optimization when the cost has stopped decreasing visibly.
}

\newcommand*{\exDefault}{1}
\newcommand*{\exFourthOrder}{2}
\newcommand*{\exAccelCost}{3}
\newcommand*{\exAccelCostHardTask}{4}
\newcommand*{\exBiggerTaskSpaceNL}{5}

\begin{table*}[t!]
\centering
\edit{
\begin{tabular}{ || c || C{1.5cm} | C{1.5cm} | C{1.5cm} | C{1.5cm} | C{1.5cm} || } 
 \hline
 Example \# & \exDefault & \exFourthOrder & \exAccelCost & \exAccelCostHardTask &  \exBiggerTaskSpaceNL \\
   \hline
 \hline
 Stride length (m) & \multicolumn{3}{c|}{[-0.4, 0.4] } & [0.3, 0.4] & \multicolumn{1}{c||}{[0.0, 0.42]} \\ 
 \hline
 Pelvis height (m)  & \multicolumn{4}{c|}{[0.87, 1.03]} & \multicolumn{1}{c||}{0.8} \\ 
 \hline
 Ground incline (rad) & \multicolumn{4}{c|}{0} & \multicolumn{1}{c||}{[-0.35, 0.35]} \\ 
 \hline
 Turning rate (rad/s) & \multicolumn{4}{c|}{0} & \multicolumn{1}{c||}{[-0.72, 0.72]} \\ 
 \hline
 Stride duration (s) & \multicolumn{4}{c|}{0.35} & \multicolumn{1}{c||}{0.35} \\ 
 \hline
 Parameterize $(r,g)$? & \multicolumn{4}{c|}{both $(r,g)$} & \multicolumn{1}{c||}{only $g$} \\ 
 \hline
 Monomial order $n_\phi$ & 2 & 4 & 2 & 2 & 4 \\ 
 \hline
  Dominant cost in $J_\gamma$ & $u$ & $u$ & $\dot{v}$ & $\dot{v}$ & $u$ \\ 
 \hline
 Cost reduction & 22.8\% & 20.7\% & 27.6\% & 38.2\% & 22.4\% \\ 
 \hline
\end{tabular}
\vspace{1mm}
\caption{Examples of model optimization. 
This table includes the task space used to train models (uniform task distribution), the highest order of the monomials of basis functions, the dominant term of the cost function $J_\gamma$, and the cost reduction percentage (relative to the cost of the initial model).}
}
  \vspace{-3mm}
\label{table:task_space}
\end{table*}

\subsection{Examples of Model Optimization}\label{sec:model_optimization_examples}

In the trajectory optimization problem in Eq. \eqref{eq:model_trajopt}, we assume the robot walks with instantaneous change of support. 
That is, the robot transitions from right support to left support instantaneously, and vice versa. 
We consider only half-gait periodic motion, and so include right-left leg alternation in the impact map $\Delta$.

We solve the problems \eqref{eq:model_trajopt} in parallel in each iteration of Algorithm \ref{alg:model_opt} using the SNOPT toolbox \cite{gill2005snopt}.
All examples were generated using the Drake software toolbox \cite{Drake2016} and source code is available in the link provided in the Abstract.

\begin{figure}[t!]
 \centering
   \includegraphics[width=0.45\linewidth]{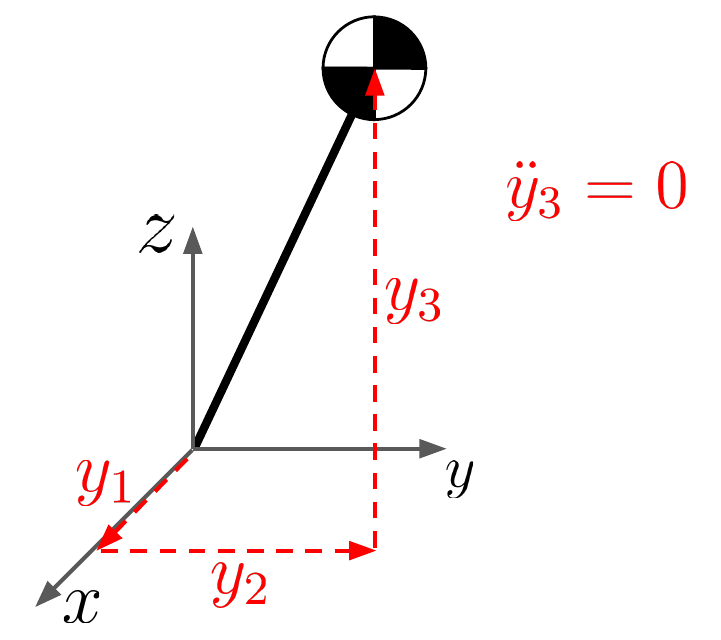}
 \vspace{-1mm}
 \caption{The linear inverted pendulum (LIP) model. 
 It is a point mass model of which height is restricted in a plane. 
 The point mass and the origin of this model correspond to the center of mass and the stance foot of the robot, respectively. In the examples of this paper, we initialize the reduced-order model to the LIP model during model optimization.}
 \label{fig:initialrom}
\end{figure}

\subsubsection{Initialization and parameterization of ROM}
To demonstrate Algorithm \ref{alg:model_opt}, we optimize for 3D reduced-order models on Cassie. 
The models are initialized with a three-dimensional LIP, of which the generalized position $y$ is shown in Fig. \ref{fig:initialrom}. 
For reference, the equations of motion of the 3D LIP model are 
\begin{equation}\label{eq:lipmwithsfdyn}
\ddot{y} = \left[\begin{array}{c}
     \ddot{y}_1\\
     \ddot{y}_2\\
     \ddot{y}_3\\
    \end{array}\right] =
    \left[\begin{array}{c}
     c_g \cdot y_1 / y_3\\
     c_g \cdot y_2 / y_3\\
     0\\
    \end{array}\right],
\end{equation}
where $c_g$ is the gravitational acceleration constant.
This model represents a point-mass body, where the body has a constant speed in the vertical direction.

We choose basis functions such that they not only explicitly include the position of the LIP, but also include a diverse set of additional terms.
That is, the basis set $\phi_e$ includes the CoM position relative to the stance foot, and monomials of $\{1, q_7, ..., q_{19}\}$ up to $n_\phi$-th order. 
Similarly, the feature set $\phi_d$ includes the terms in LIP dynamics (i.e. $c_g y_1 / y_3$ and $c_g y_2 / y_3$) and monomials of $\{1, y_1, y_2, y_3, \dot{y}_1, \dot{y}_2, \dot{y}_3\}$ up to $n_\phi$-th order.  
With these basis functions, the ROM parameters $\theta$ can be trivially initialized to match the LIP model's.

\begin{figure}[t!]
 \centering
    \includegraphics[width=0.95\linewidth]{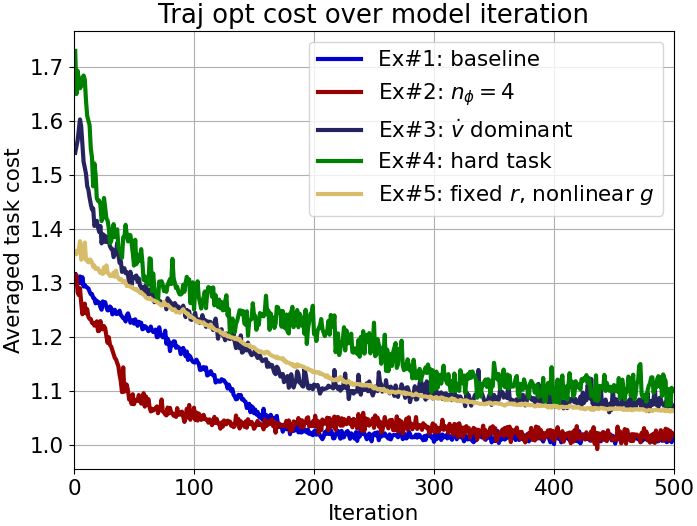}

 \caption{The averaged cost of the sampled tasks of each model optimization iteration in Examples \exDefault \ to \exBiggerTaskSpaceNL. 
 Costs are normalized by the cost associated with the full-order model (i.e. the cost of full model trajectory optimization without any reduced-order model embedding). 
 Therefore, the costs cannot go below 1. 
 The costs at iteration 1 represent the averaged costs for the robots with the embedded initial reduced-order models, LIP. 
 Note that the empirical average does not strictly decrease, as tasks are randomly sampled and are of varying difficulty.
 }
 \label{fig:cassie_3dlipm_cost_comparison}
\end{figure}

\subsubsection{Optimization Examples and Result}\label{sec:model_opt_result}

We demonstrate a few examples of model optimization and compare their results.
The examples are shown in Table \ref{table:task_space} along with their detailed settings.
\eedit{
We note that it is a common practice to fix the duration to ease the problem difficulty of \eqref{eq:model_trajopt} when dealing with high-dimensional robots \cite{Posa2016, gong2019feedback}. 
Freeing the duration can certainly be a variation of the examples and we left it for future work. 
}

The optimization results are shown in Fig. \ref{fig:cassie_3dlipm_cost_comparison}, where the costs are normalized by the optimal cost of \eqref{eq:model_trajopt} without ROM embedding (i.e. without the constraints $g_c$).
The cost function $h_\gamma$ was chosen to be the weighted sum of squares of the robot input $u$, the generalized velocity $v$ and acceleration $\dot{v}$. 
In Examples \exDefault, \exFourthOrder \ and \exBiggerTaskSpaceNL, we heavily penalize the input term which is a proxy of a robot's energy consumption.
For the other examples, we heavily penalize the acceleration $\dot{v}$.
We observed that Cassie's motions with the initial ROM are very similar among all examples.
In contrast, the motions with optimal ROMs are mostly dependent on the cost function $h_\gamma$, given the same ROM parameterization.
Compared to Example \exDefault, the optimal motion of Example \exAccelCost \ shows more vertical pelvis movement.

\edit{
A comparison between Example \exDefault \ to Example \exFourthOrder \ shows the effect of the order of the monomials $n_\phi$ in the basis function. 
We can see in Fig. \ref{fig:cassie_3dlipm_cost_comparison} that these two examples share the same starting cost, 
because the initial weights on the monomials are zeros, making the trajectory optimization problems identical. } 
Additionally we can also see that parameterizing the ROM with second-order monomials seems sufficient  for the task space of Examples \exDefault \ and \exFourthOrder, since the final normalized cost is close to 1. 

\edit{
A comparison between Example \exDefault \ to Example \exAccelCost \ shows the effect of different choices of cost function $h_\gamma$. 
The initial cost of Example \exAccelCost \ is much higher than Example \exDefault's, which we can interpret as the LIP model being more restrictive under the performance metric of Example \exAccelCost \ than that of \exDefault. }

\edit{
A comparison between Example \exAccelCost \ and Example \exAccelCostHardTask \ shows the effect of the task space. }
Example \exAccelCostHardTask's task space is a subset of Example \exAccelCost's, specifically the part of the task space with bigger stride length.
We would expect the LIP model does not perform well with big stride length, and indeed Fig. \ref{fig:cassie_3dlipm_cost_comparison} shows that the initial cost of Example \exAccelCostHardTask \ is higher than that of Example \exAccelCost.
Fortunately, a high initial cost provides us with a bigger room of potential improvement.
As we see in Table \ref{table:task_space}, Example \exAccelCost \ has higher cost reduction than Example \exDefault, and Example \exAccelCostHardTask\  has the highest cost reduction.

\edit{
In Example \exBiggerTaskSpaceNL, 
the dimension of the task space\footnote{
\edit{
The dimension here is the non-degenerate dimension, meaning the task dimension with non-zero volume. 
In Example \exBiggerTaskSpaceNL, the dimension is 3 because we vary the stride length, ground incline and turning rate.
}
} is increased compared to the other examples, 
and we only parameterize the ROM dynamics $g$. 
That is, the embedding function $r$ remains to be a simple forward kinematic function -- the center of mass position relative to the stance foot.
In this case, the algorithm was again able to find an optimal model, 
and the result shows that parameterizing only the ROM dynamics $g$ is sufficient enough for achieving near full model performance (about 5\% higher than full-order model's performance). 
}

The optimized models are capable of expressing more input-efficient motions
than the LIP model, better leveraging the natural dynamics of Cassie. The ROM optimization improves the robot performance, while maintaining the model simplicity.
We note that the optimal model, unlike its classical counterpart, does not map easily to a physical model, if the embedding function $r$ contains abstract basis functions such as monomials. 
While this limits our ability to attach physical meaning to $y$ and $\tau$, it is a sacrifice that one can make to improve performance beyond that of hand-designed approaches.

\section{MPC for a Special Class of ROM}\label{sec:mpc}

\begin{figure}[t!]
 \centering
      \includegraphics[width=1\linewidth]{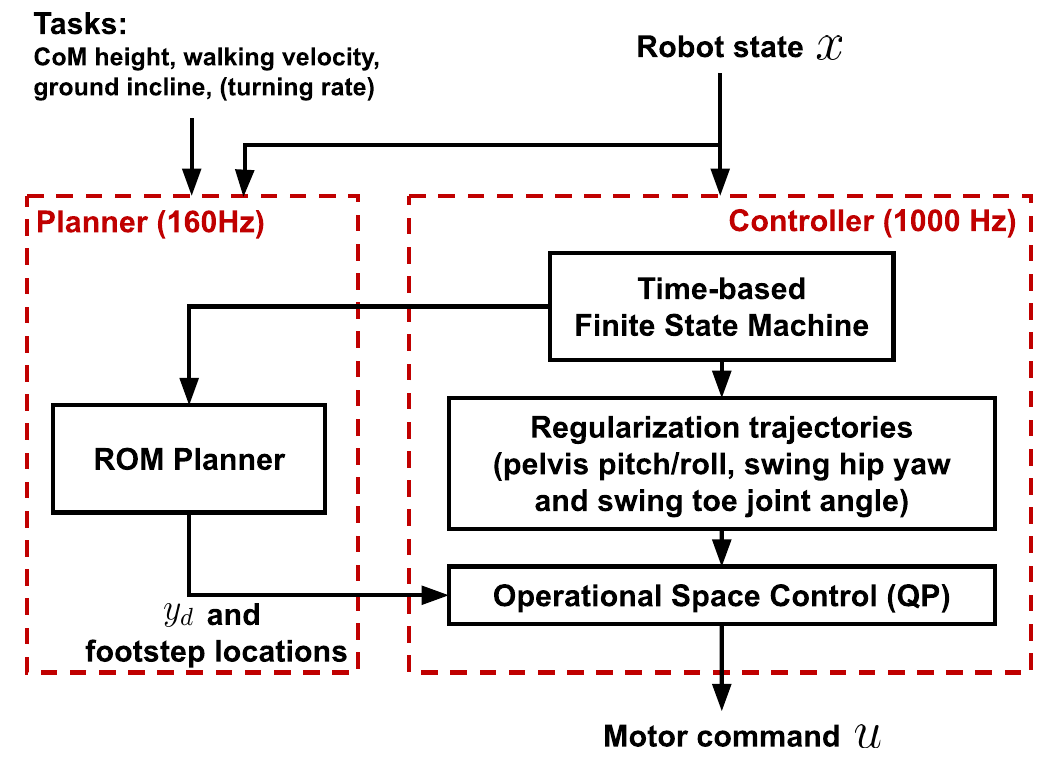}
\vspace{-5mm}

 \edit{
 \caption{
 The diagram of the model predictive control (MPC) introduced in Section \ref{sec:mpc}. 
 The MPC is composed of the controller process and the planner process, 
 and it contains a time-based finite state machine which outputs either left or right support state.
 This finite state machine determines the contact sequence of the high-level planner and the contact mode of the low-level model-based controller. 
 The high-level planner solves for the desired reduced-order model trajectories and swing foot stepping locations, given tasks (commands) and the finite state. 
 For reduced-order models without body orientation (e.g. CoM model without moment of inertia), we send the turning rate command to the controller process instead of planner process.
 Inside the controller process, the regularization trajectories are used to fill out the joint redundancy of the robot.
 These regularization trajectories are derived from simple heuristics such as maintaining a horizontal attitude of the pelvis body, having the swing foot parallel to the contact surface, and aligning the hip yaw angle with the desired heading angle.
 All desired trajectories are sent to the Operational Space Controller (OSC) which is a quadratic-programming based inverse-dynamics controller \cite{sentis2005control, wensing2013generation}.
 }
 }
 
 \label{fig:mpc_diagram}
\end{figure}

After a ROM is optimized, we embed it in the robot via an MPC to achieve desired tasks, depicted in Fig. \ref{fig:outline}.
\edit{
Specifically, in this and the next section (Sections \ref{sec:mpc} and \ref{sec:closed_loop_eval}), we build upon Example \exBiggerTaskSpaceNL.
Example \exBiggerTaskSpaceNL \ limits the ROM to a fixed embedding function $r$, the CoM position relative to the stance foot.
This physically-interpretable embedding simplifies the planner and enables a richer performance analysis in Section \ref{sec:closed_loop_eval}.
}
The planner for a general ROM will be introduced in Section \ref{sec:mpc_generic_rom}.

The MPC structure is shown in Fig. \ref{fig:mpc_diagram}. 
It contains a high-level planner in the reduced-order space (Section \ref{sec:plan_with_rom}) and a low-level tracking controller in the full-order space (Section \ref{sec:osc_implementation}). 
The high-level planner receives the robot state and tasks, and plans for the desired ROM trajectory and the footsteps of the robot.
The controller tracks these desired trajectory and footsteps, while internally using nominal trajectories to handle the system redundancy.

\begin{table*}[h!]
\centering
\begin{tabular}{ ||p{2.8cm}||C{1cm}|p{0.5cm}|p{0.5cm}|p{0.5cm}|p{0.5cm}|p{0.5cm}|p{0.5cm}|p{0.5cm}|p{0.5cm}|p{0.5cm}||  }
\hline
\multirow{2}{*}{Trajectory $y_i^{osc}$ } & \multirow{2}{*}{dim $y_i^{osc}$} & \multicolumn{3}{|c|}{cost weight $W$} & \multicolumn{3}{|c|}{$K_p$} & \multicolumn{3}{|c||}{$K_d$} \\
\cline{ 3- 11}
 & & x & y & z & x & y & z & x & y & z \\
\hhline{||= ==========||}
reduced-order model  & 3 & 0.1 & 0 & 10 & 10 & 0 & 50 & 0.2 & 0 & 1 \\
\cline{ 2- 11}
pelvis orientation  & 3 & 2 & 4 & 0.02 & 200 & 200 & 0 & 10 & 10 & 10 \\
\cline{ 2- 11}
swing foot position	& 3  & 4 & 4 & 4 & 150 & 150 & 200 & 1 & 1 & 1 \\
\cline{ 2- 11}
swing leg hip yaw joint & 1 & \multicolumn{3}{|c|}{0.5} & \multicolumn{3}{|c|}{40} & \multicolumn{3}{|c||}{0.5} \\
\cline{ 2- 11}
swing leg toe joint & 1 & \multicolumn{3}{|c|}{2} & \multicolumn{3}{|c|}{1500} & \multicolumn{3}{|c||}{10} \\
\hline
\end{tabular}
\vspace{1mm}
\caption{Trajectories and gains in the Operational Space Control (OSC)}
\label{table:tracking_gains}
\vspace{-5mm}
\end{table*}

\subsection{Planning with Reduced-order Models}\label{sec:plan_with_rom}

We formulate a reduced-order trajectory optimization problem to walk 
$n_s$ strides, using direct collocation method described in Section \ref{sec:trajopt_background} to discretize the trajectory into $n$ knot points.
Under the premise that the ROM embedding $r$ is the CoM, we further assume the ROM does not have continuous inputs $\tau$ (e.g. center of pressure) but it has discrete inputs $\tau_{fp} \in \mathbb{R}^2$ which is the stepping location of the swing foot relative to the stance foot.
Let $z = [y, \dot{y}] \in \mathbb{R}^{2n_y}$, and let $z^-$ and $z^+$ be the reduced state of pre- and post-touchdown event, respectively. 
The discrete dynamics is 
\begin{equation}\label{eq:com_discrete_dync}
z^+ = z^- + B_{fp} \tau_{fp}
\end{equation}
with 
\begin{equation*}
B_{fp} = 
\begin{bmatrix}
 -1 & 0 & 0 & 0 & 0 & 0\\
 0 & -1 & 0 & 0 & 0 & 0
\end{bmatrix}^T.
\end{equation*}
The first two rows of Eq. \eqref{eq:com_discrete_dync} correspond to the change in stance foot reference for the COM position. 
The last three rows are derived from the assumption of zero ground impact at the foot touchdown event.

To improve readability, we stack decision variables into bigger vectors $z = [y, \dot{y}] \in \mathbb{R}^{2n_y}$, 
$Z=[z_0, z_1, ..., z_{n}] \in \mathbb{R}^{2n_y (n+1)}$, 
and $\Tau_{fp}=[\tau_{fp,1}, ..., \tau_{fp,n_s}] \in \mathbb{R}^{2 n_s}$.
The cost function of the planner is quadratic and expressed in terms of $Z$ and $\Tau_{fp}$.
The planning problem is 
\begin{equation}\label{eq:rom_planning_trajopt}
    \begin{array}{cl}
     \underset{Z, \Tau_{fp}}{\text{min}} & \| Z - Z_{d} \|^2_{W_Z} + \| \Tau_{fp} \|^2_{W_\Tau}  \\
     \text{s.t.} & \text{ROM continuous dynamics (Eq. \eqref{eq:model_dyn})}, \\
      & \text{ROM discrete dynamics (Eq. \eqref{eq:com_discrete_dync}}),\\
      & C_{kinematics} (Z,\Tau_{fp}) \leq 0, \\
      & z_0 = \text{current feedback reduced-order state},  \\
    \end{array}
\end{equation}
where $W_Z$ and $W_T$ are the weights of the norms, 
$Z_{d}$ is a stack of desired states which encourage the robot to reach a goal location and regularize velocities,
and $C_{kinematics} \leq 0$ is the constraints on step lengths and stepping locations relative to the CoM.
After solving Eq. \eqref{eq:rom_planning_trajopt}, we reconstruct the desired ROM trajectory $y_d(t)$ from the optimal solution $Z^*$, and we construct desired swing foot trajectories from $\Tau_{fp}^*$ with cubic splines.

\subsection{Operational Space Controller}\label{sec:osc_implementation}

A controller commonly used in legged robots is the quadratic-programing-based operational space controller (QP-based OSC), which is also referred to as the QP-based whole body controller \cite{sentis2005control, wensing2013generation}. 
Assume there are $N_{y}$ number of outputs $y_i^{osc}(q)$, with desired outputs $y_{i,d}^{osc}(t)$, where $i=1,2,...N_y$. For each output (neglecting the subscript $i$), we can derive the commanded acceleration as the sum of the feedforward acceleration of the desired output and a PD control law   
$$\ddot{y}_{cmd}^{osc} = \ddot{y}_{d}^{osc} + K_p (y_{d}^{osc} - y^{osc}) + K_d (\dot{y}_{d}^{osc} - \dot{y}^{osc}).$$
At a high level, the OSC solves for robot inputs that minimize the output tracking errors, while respecting the full model dynamics and constraints (essentially an MPC but with zero time horizon). 
The optimal control problem of OSC is formulated as
\begin{subequations}\label{eq:osc}
\begin{align}
     \underset{\dot{v},u,\lambda, \epsilon}{\text{min}} \hspace{3mm} &  \displaystyle\sum_{i=1}^{n_y} \| \ddot{y}_i^{osc} - \ddot{y}_{i,cmd}^{osc} \|^2_{W_i} + \| u \|^2_{W_u} +  \| \epsilon \|^2_{W_\epsilon}\\
\text{s.t.} \hspace{3mm}    
      &  \ddot{y}_i^{osc} = J_i\dot{v} + \dot{J}_i v,  \ \ i = 1, ... , N_y \label{eq:osc_target} \\
      & \text{Dynamics constraint (Eq. \eqref{eq:eom})}  \\
              &  \epsilon = J_h\dot{v} + \dot{J}_hv \label{eq:osc_holonomic}\\
      &  u_{min} \leq u \leq u_{max} \\
      &  \text{Contact force constraints}  \label{eq:osc_force_constraint}
\end{align}
\end{subequations}
where $\| \cdot \|_W$ is the weighted 2-norm,  
\edit{
\eqref{eq:osc_holonomic} contains Cassie's four-bar-linkage constraints, fixed-spring constraints and relaxed contact constraints (relaxed by slack variables $\epsilon$), }
and \eqref{eq:osc_force_constraint} includes friction cone constraints, non-negative normal force constraints and force blending constraints for stance leg transition. 

Table \ref{table:tracking_gains} shows all trajectories tracked by the OSC and their corresponding gains and cost weights.
The trajectories of the reduced-order model, pelvis orientation and swing foot position are all 3 dimensional, while the hip yaw joint and toe joint of the swing foot are 1 dimensional.
The symbols (x,y,z) in Table \ref{table:tracking_gains} indicate the components of the tracking target. 
They do not necessarily mean the physical (x, y, z) axes for the reduced-order model, since the model optimization might produce a physically non-interpretable model embedding $r$.

In the existing literature of bipedal robots, robot's floating base position (sometimes the CoM position) and orientation are often chosen to be control targets.
They have 6 degrees of freedom (DoF) in total.
In the case of fully-actuated robots (i.e. robots with flat feet), there is enough control authority to servo both the position and orientation.
For underactuated robots, the existing approaches often give up tracking the trajectories in the transverse plane (x and y axis), because it is not possible to instantaneously track trajectories whose dimension is higher than the number of actuators (or we have to trade off the tracking performance). 
In this case, motion planning for discrete footstep locations is used to regulate the underactuated DoF.
In our control problem, we also face the same challenge since Cassie has line feet. 
The total dimension of the desired trajectories in Table \ref{table:tracking_gains} is 11, while Cassie only has 10 actuators.
Following the common approach, we choose not to track the second element  of the ROM in OSC, because it corresponds to the lateral position of the CoM for the initial model (a competing tracking objective to the pelvis roll angle) and maintaining a good pelvis roll tracking is crucial for stable walking.
Instead, the second element of ROM is regulated by the desired swing foot locations via the planner in Eq. \eqref{eq:rom_planning_trajopt}, even though the OSC does not explicitly track it.

\subsection{Hardware Setup and Solve Time} \label{sec:hardware_setup_and_solve_time}
We implement the MPC in Fig. \ref{fig:mpc_diagram} using the Drake toolbox \cite{Drake2016}, and the code is publicly available in the link provided in the Abstract. In hardware experiment, the MPC planner runs on a laptop equipped with Intel i7 11800H, and everything else (low-level controller, state estimator, etc) on Cassie's onboard computer. 
These two computers communicate via LCM \cite{huang2010lcm}.
A human sends walking velocity commands to Cassie with a remote controller.
Cassie is able to stably walk around with both the initial ROM and the optimal ROM (shown in the supplementary video).

The planning horizon was set to 2 foot steps with stride duration being 0.4 seconds. 
With cubic spline interpolation between knot points, we found that 4 knot points per stride was sufficient.
IPOPT \cite{wachter2006implementation} was used to solve the planning problem in Eq. \eqref{eq:rom_planning_trajopt}, 
and the solve time was on average around 6 milliseconds with warm-starts. 
We observed that this solve time was independent of the reduced-order models (initial or optimal) in our experiments.
In contrast to the ROM, similar code required tens of seconds for the simplified Cassie model for a single foot step. 
As the following sections will show, Cassie's performance (with respect to the user-specified cost function) with the optimal model is better than with the initial model.
This demonstrates that the use of ROM greatly increases planning speed, and that the optimized ROM improves the performance of the robot.

\section{Performance Evaluation and Comparison}\label{sec:closed_loop_eval}

\begin{figure*}[!t]
\edit{
\centering

\includegraphics[width=0.95\linewidth]{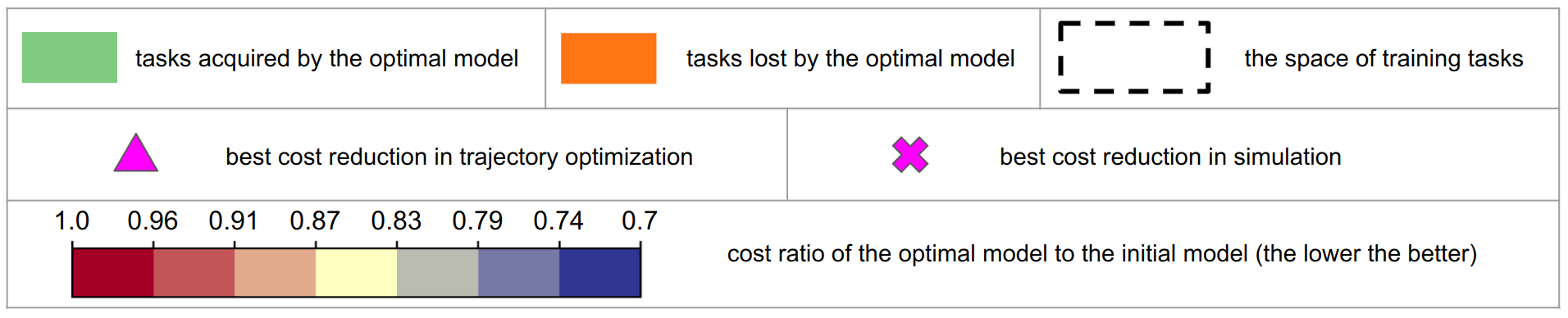}
\vspace{-3mm}

\subfloat[Trajectory optimization, \ref{env:trajopt}.]{
\includegraphics[width=0.44\linewidth]{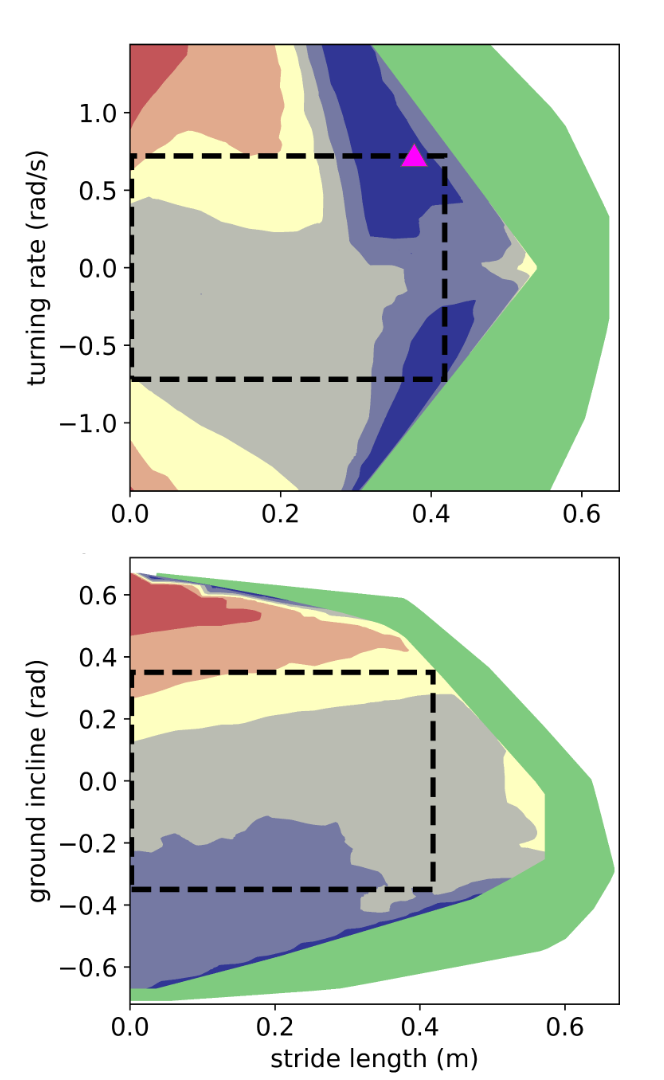}
\label{fig:openloop_cost_landscape_comparison_3d_task}}
\subfloat[Simulation, \ref{env:simplified_cassie_sim}.]{
\includegraphics[width=0.45\linewidth]{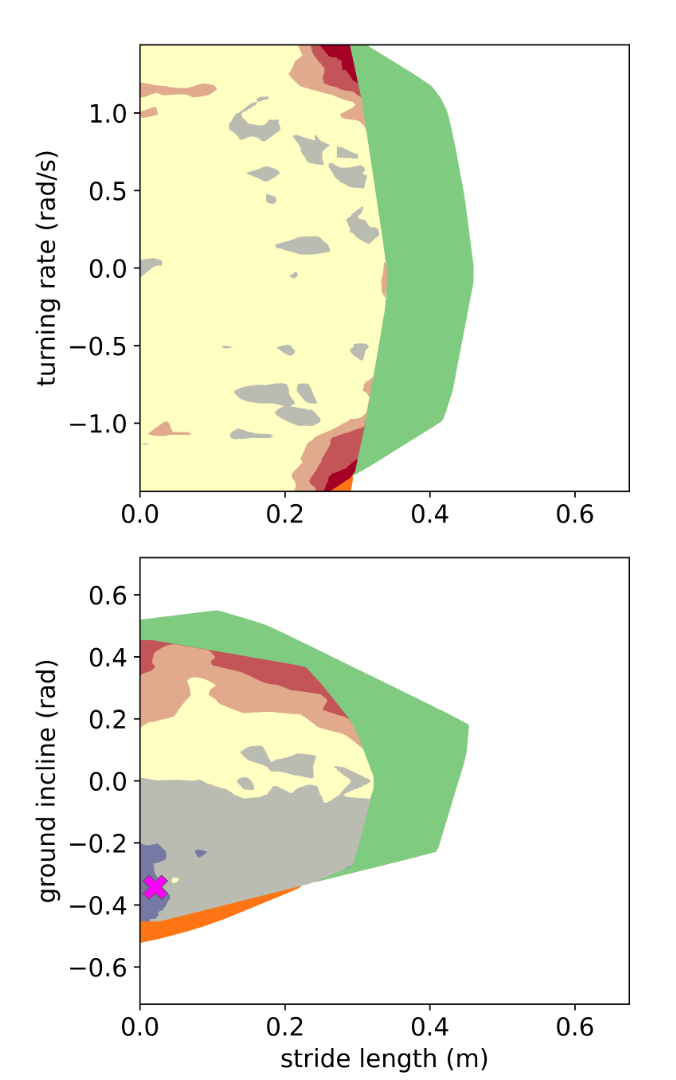}
\label{fig:closedloop_cost_landscape_comparison_3d_task}}        
		
\caption{
Cost comparison between the initial model \ref{case:initial_rom} and the optimal model \ref{case:optimal_rom}. 
Each plot shows the ratio of the optimal model's cost to the initial model's cost. 
For these examples, the ROM reduces the cost across the entire task space. 
The color scheme red-to-blue illustrates the degree to which the ROM shows improvement, with red corresponding to a minimal improvement and blue to a 30\% reduction.
The scales of the axes are the same between the trajectory optimization \ref{env:trajopt} and the simulation \ref{env:simplified_cassie_sim} for ease of comparisons.
         }
\label{fig:cost_landscape_comparison_3d_task}
}
\end{figure*}

In this section, we evaluate the performance of the robot (with respect to a user-specified cost function $\tilde{h}_\gamma$ in Section \ref{sec:task_eval}) in the following ROM settings:
\begin{enumerate}[start=1,label={(R\arabic*)}, leftmargin = 3em]
\item \label{case:no_rom} without reduced-order model embedding,
\item \label{case:initial_rom} with initial reduced-order model embedding,
\item \label{case:optimal_rom} with optimal reduced-order model embedding.
\end{enumerate}
Additionally, the evaluation is done in the following cases:
\begin{enumerate}[start=1,label={(C\arabic*)}, leftmargin = 3em]
\item \label{env:trajopt} trajectory optimization (open-loop),
\item \label{env:simplified_cassie_sim} simulation (closed-loop),
\item \label{env:real_cassie} hardware experiment with real Cassie (closed-loop), 
\end{enumerate}
where \ref{env:trajopt} is labeled as open-loop, while the others are considered closed-loop, because trajectory optimization is an optimal control method that solves for control inputs and feasible state trajectories simultaneously, without requiring a feedback controller.
Table \ref{fig:table_of_conducted_experiments} lists the experiments conducted in this section. 
We note that \ref{env:trajopt} is the same as Eq. \eqref{eq:model_trajopt} but with a different task distribution, and that \ref{case:no_rom} is only evaluated in \ref{env:trajopt} because it serves as an idealized benchmark for comparison.

\subsection{Experiment Motivations}

\subsubsection{Motivation for \ref{env:trajopt}} 
In Section \ref{sec:approaches}, we optimized for a reduced-order model given a task distribution. 
\edit{
Here, one objective is to evaluate how well the model generalizes to out-of-distribution tasks. }
Additionally, trajectory optimization provides the ideal performance benchmark for the closed-loop system to compare to.

\subsubsection{Motivation for \ref{env:simplified_cassie_sim}}
Trajectory optimization is used in Eq. \eqref{eq:model_trajopt} to find the optimal model based on the open-loop performance. 
\ref{env:simplified_cassie_sim} evaluates how well the cost reduction in open-loop can be translated to the closed-loop system with the MPC from Section \ref{sec:mpc}. 

\subsubsection{Motivation for \ref{env:real_cassie}}
\ref{env:real_cassie} evaluates how well the performance improvement can be translated to hardware.

\begin{table}[t!]
\edit{
 \centering
   \includegraphics[width=1.0\linewidth]{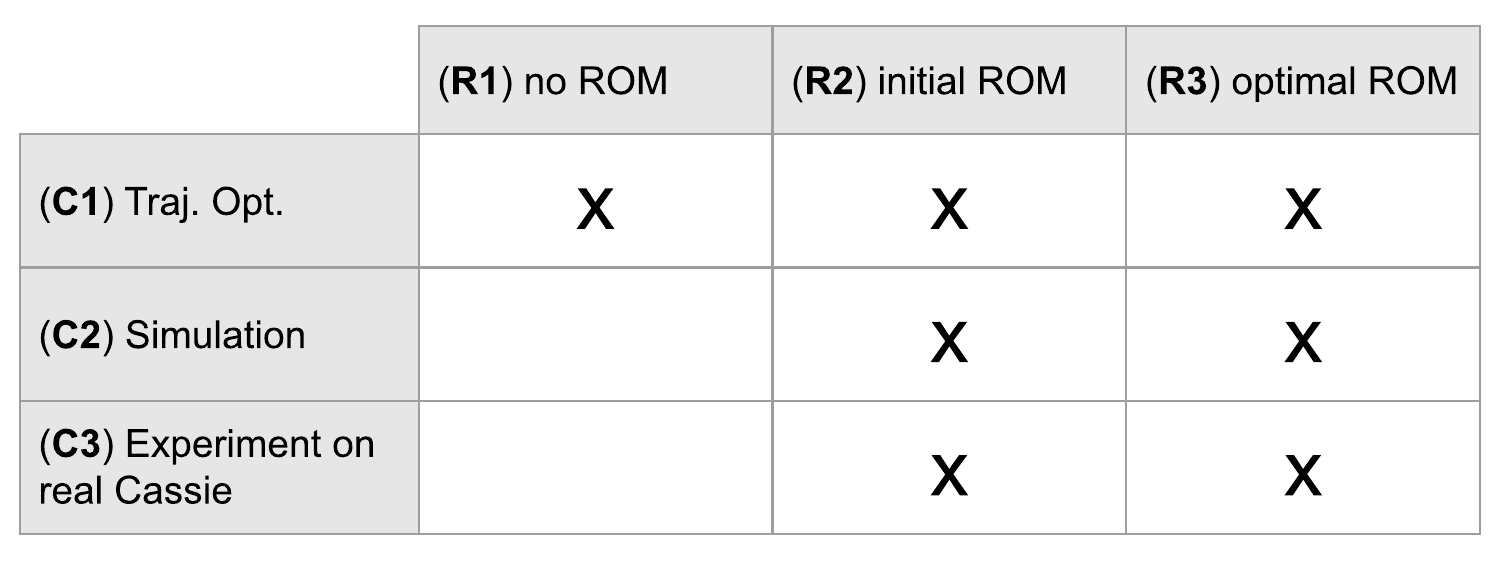}
 \vspace{-3mm}
 \caption{
 Experiments conducted in Section \ref{sec:closed_loop_eval} (marked with x)
}
 \label{fig:table_of_conducted_experiments}
 }
\end{table}

\begin{table}[t!]
\centering
\edit{
\begin{tabular}{ | c | c | } 
 \hline
 Stride length variation & $<2$ cm \\ 
 \hline
 Side stepping variation & $<3$ cm \\ 
 \hline
 Pelvis height variation & $<3$ cm \\ 
 \hline
 Pelvis yaw variation & $<0.1$ rad \\ 
 \hline
 Window size & 4 consecutive footsteps \\ 
 \hline
\end{tabular}
\vspace{1mm}
\caption{Criteria to determine periodic walking gaits}
\label{table:periodic_criteria}
}
  \vspace{-3mm}
\end{table}

\subsection{Experiment Setups}
\edit{
In all experiments in this section, we use the initial and the optimal ROM from Example \exBiggerTaskSpaceNL.
}
\edit{
Additionally, for all performance evaluations, we use the cost function $\tilde{h}_\gamma$  from Example \exBiggerTaskSpaceNL \ which mainly penalizes the joint torques. 
}

\subsubsection{\ref{env:trajopt} Trajectory Optimization}
We evaluate the open-loop performance by running the full-model trajectory optimization in Eq. \eqref{eq:model_trajopt} over a wide range of tasks (stride length, turning rate, ground incline, etc).
As a special case, \ref{env:trajopt} combined with \ref{case:no_rom} corresponds to Eq. \eqref{eq:model_trajopt} without the constraint $g_c = 0$. 

\subsubsection{\ref{env:simplified_cassie_sim} Simulation}
We use Drake simulation \cite{Drake2016}.
The MPC horizon is set to two footsteps, and the duration per step is fixed to 0.35 seconds which is the same as that of open-loop. 
Similar to \ref{env:trajopt}, we evaluate the performance at different tasks. 
\edit{
For each desired task, we run a simulation for 12 seconds and extract a periodic walking gait based on a set of criteria listed in Table \ref{table:periodic_criteria}. 
Then we compute the cost and the actual achieved tasks (stride length, turning rate, etc) of that periodic gait.
}

\edit{
\subsubsection{\ref{env:real_cassie} Hardware Experiment}
Some heuristics are introduced to the MPC in order to stabilize Cassie well. 
For example, we add a double-support phase to smoothly transition between two single-support phases by linearly blending the ground forces of the two legs. 
This is critical for Cassie, because unloading the springs of the support leg too fast when transitioning into swing phase can cause foot oscillation and bad swing foot trajectory tracking.
The double-support phase duration is set to 0.1 seconds, and the swing phase duration is decreased to 0.3 seconds, compared to the nominal 0.35 seconds of stride duration in the trajectory optimization.
}

The hardware setup is described in Section \ref{sec:hardware_setup_and_solve_time}.
During the experiment, we send commands to walk Cassie around and make sure that the safety hoist does not interfere with Cassie's motion. 
After the experiment, we apply the criteria listed in Table \ref{table:periodic_criteria} to extract periodic gait for performance evaluation.

\subsection{Turning and Sloped Walking in Simulation}\label{sec:showcase_planner}

The goal of this section is to evaluate model performance in simulation.
We use the initial and the optimal model from Example \exBiggerTaskSpaceNL \ of Section \ref{sec:model_optimization_examples}. 
The training task distribution of this model covers various turning rates, ground inclines and positive stride lengths, shown in Table \ref{table:task_space}.
During performance evaluation (both \ref{env:trajopt} and \ref{env:simplified_cassie_sim}), we increase the task space size to two times of that of training stage in order to examine the optimal model's performance on the both seen (from training) and unseen tasks.

To visualize the performance improvement, we compare the cost landscapes between the initial and the optimal model.
For \ref{env:trajopt} and \ref{env:simplified_cassie_sim}, we first derive the cost landscapes of both models using the cost function $\tilde{h}_\gamma$ and then superimpose them in terms of cost ratio (i.e. ratio of \ref{case:optimal_rom}'s cost to \ref{case:initial_rom}'s cost).
The cost landscape comparisons are shown in Fig. \ref{fig:cost_landscape_comparison_3d_task}.
The red-blue color bar represents different levels of performance improvement in terms of $\tilde{h}_\gamma$.
Green color corresponds to the tasks acquired by the optimal model (i.e. the task that the initial model cannot execute).
Orange color corresponds to the task lost by using the optimal model.
\eedit{
We see in Fig. \ref{fig:cost_landscape_comparison_3d_task} that the optimal ROM outperforms the LIP across all depicted tasks, 
as the cost ratios are smaller than 1, indicated by the color bar.
In trajectory optimization, 
the maximum cost reduction is 30\%, occurring at a stride length of approximately 0.38 meters and a turning rate of 0.71 rad/s, indicated by the pink triangle in Fig. \ref{fig:openloop_cost_landscape_comparison_3d_task}.
On the other hand, the maximum cost reduction in simulation is 23\%, observed at a stride length of approximately 0.02 meters and a -0.34 radians ground incline, indicated by the pink cross in Fig. \ref{fig:closedloop_cost_landscape_comparison_3d_task}.
This reduction in cost implies, for instance, that Cassie is able to complete the same task with 23\% less joint torque in simulation.
}

Besides the improvement in terms of cost ratio, we also observe in simulation that the optimal ROM gained new task capability (indicated by the green area).
For example, Cassie is capable of walking 54\% faster on a slope of 0.2 radian when using the optimal ROM.
Cassie also gets better in climbing steeper hills. At 0.1m stride length, Cassie can climb up a hill with 32\% steeper incline. 
Overall, the task region gained is much bigger than the lost in simulation.

\begin{figure}[t!]
\edit{
 \centering
  \includegraphics[width=1.0\linewidth]{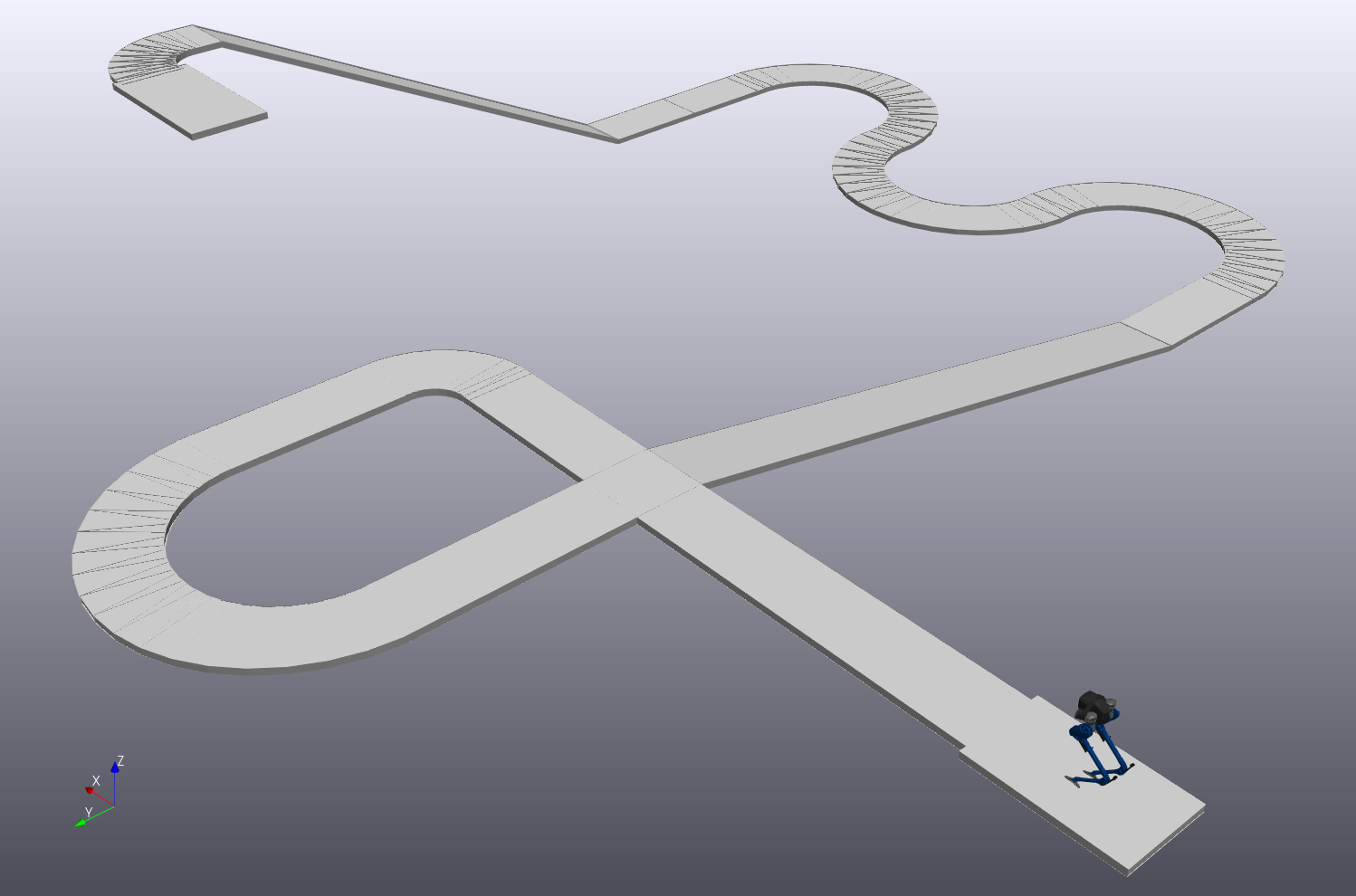}
 \vspace{-6mm}
 \caption{A track designed to showcase the performance difference between the LIP and the optimal ROM in simulation.
 The video of Cassie finishing the track can be found in the supplementary materials.
}
 \label{fig:track}
 }
\end{figure}

\begin{table}[t!]
\edit{
\centering
\begin{tabular}{ | c | c | c | c | } 
 \hline
 & LIP & Optimal ROM & Speed Improvement \\ 
 \hline
 Straight line (5m) & 5.72 (s) & 4.05 (s) & 41\% \\ 
 \hline
 Fast 90-degree turn & 1.65 (s) & 1.2 (s) & 38\% \\ 
 \hline
 Downhill (20\%) & 11.67 (s) & 8.4 (s) & 39\% \\ 
 \hline
 S-turns & 20.37 (s) & 14.47 (s) & 41\% \\ 
 \hline
 Uphill (50\%) & failed & 17.73 (s) & - \\ 
 \hline
\end{tabular}
\caption{Completion time for some segments of the course.}
\label{table:time_breakdown}
}
  \vspace{-3mm}
\end{table}

\begin{figure}
 \centering
 \includegraphics[width=.9\linewidth]{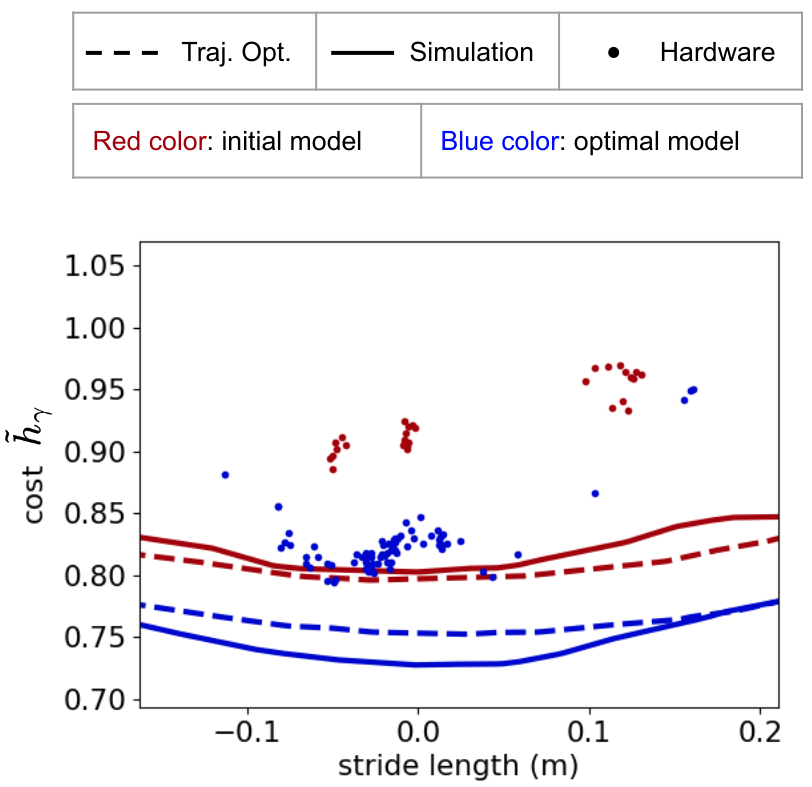}
 \vspace{-2mm}
 \edit{	
\caption{Cost comparison between the initial model \ref{case:initial_rom} and the optimal model \ref{case:optimal_rom}. 
The cost $\tilde{h}_\gamma$ is the cost function in Eq. \eqref{eq:model_trajopt}.
For trajectory optimization and simulation, we densely sampled the tasks and interpolated the costs.
 For the hardware experiment, we plot the costs of collected data points directly in the figure without interpolation. 
 To collect the data on hardware, we used a remote control to walk Cassie around, and we applied a moving window of 4 foot steps to extract periodic gaits according to Table \ref{table:periodic_criteria}. 
 We note that there was about 2 cm of height variation in the hardware experiment, but we normalized every extracted data point to the same height using the height-cost relationship from the trajectory optimization to make fair comparisons.
          }
         }
 \label{fig:cost_landscape_comparisons}
  \vspace{-0mm}
\end{figure}

Comparing the cost landscape between the trajectory optimization and simulation, we can see that the task region gained are similar\footnote{In the case of \ref{env:trajopt}, there was not a clear threshold for defining the failure of a task. We picked a cost threshold at which the robot's motion does not look abnormal.}. 
Cassie in general can walk faster at different turning rates and on different ground inclines.
We also observed that the cost landscapes of the open- and closed-loop share a similar profile in ground incline. 
Both show bigger cost reduction in walking downhill than uphill.
In contrast, the landscapes in turning rate look different between the open and closed loop. 
This is partially because there is only one stride (left support phase) in trajectory optimization while we average the cost over 4 strides in the simulation.
Additionally, there is a difference in stride lengths between the open- and closed-loop, showing a control challenge in stabilizing around high-speed nominal trajectories. 
This gap can be mitigated by increasing the control gains in simulation. 
However, we avoid using unrealistic high gains because they do not work on hardware.

\begin{figure}[!t]
\centering
\subfloat[Initial model \ref{case:initial_rom}. 2D slice at CoM position = 0m in y axis (when the CoM is right above the stance foot).]{
\includegraphics[width=0.85\linewidth]{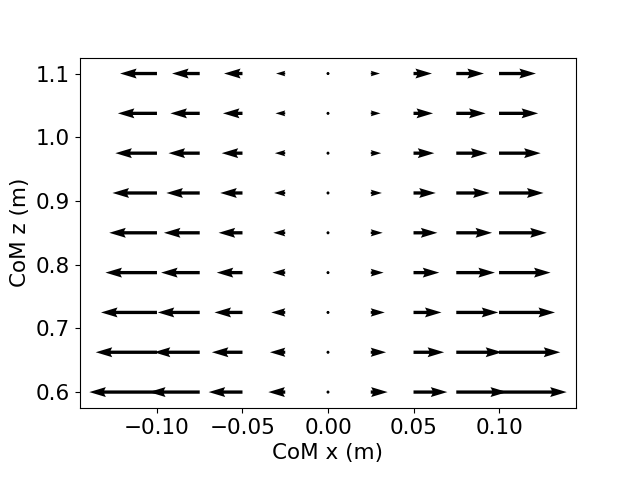}
\label{fig:com_dynamics_vector_field_iter1}}

\subfloat[Optimal model \ref{case:optimal_rom}. 2D slice at CoM position = 0m in y axis (when the CoM is right above the stance foot).]{
\includegraphics[width=0.85\linewidth]{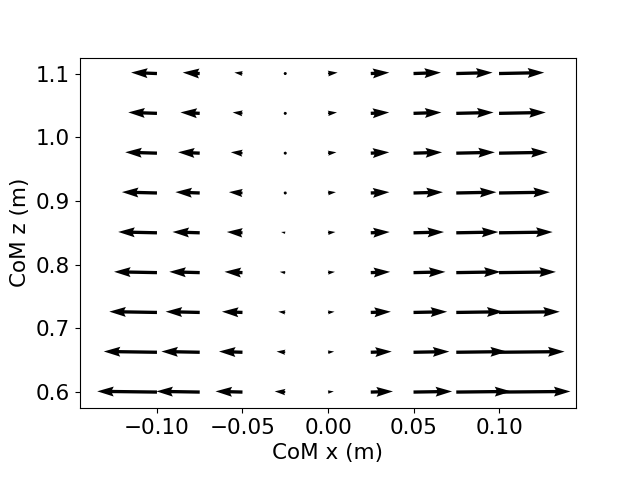}
\label{fig:com_dynamics_vector_field_iter400}}        

\subfloat[Optimal model \ref{case:optimal_rom}. 2D slice at CoM position = -0.2m in y axis (when the CoM is to the right of the left stance foot, for example).]{
\includegraphics[width=0.85\linewidth]{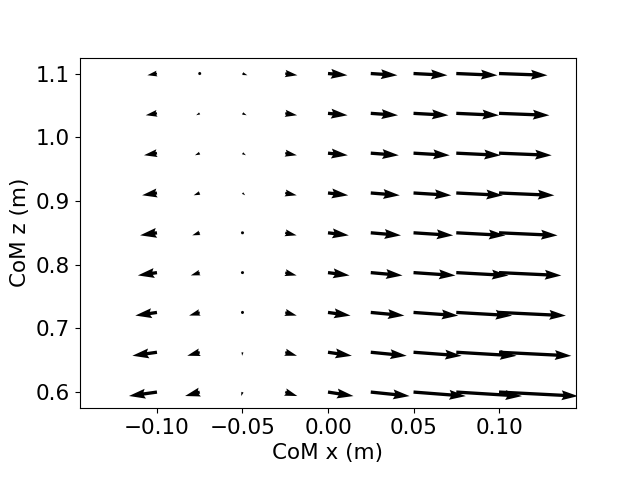}
\label{fig:com_dynamics_vector_field_iter400_different_slice}}        
		
\vspace*{5mm}

\caption{The vector fields of the ROM dynamics $g$ over the CoM x and z position. 
In this example, the dynamics is the acceleration of the CoM, which is a function of the CoM position and velocity defined in Eq. \eqref{eq:model_dyn}.
The first plot is the initial model's dynamics, while the latter two are that of the optimal model at two different slices of CoM y position. 
In all plots, the CoM velocity is 0.
We note that the size of the vectors only reflects the relative magnitude.
The absolute magnitude of the vectors for the first two plots are shown in Fig. \ref{fig:com_dynamics_magnitude}, although the scales of the x axis are different. 
}
\label{fig:com_dynamics_vector_field}
\end{figure}

\begin{figure}[!t]
\centering
\subfloat[Initial model \ref{case:initial_rom}. 2D slice at CoM position = 0m in y axis (when the CoM is right above the stance foot).]{
\includegraphics[width=1\linewidth]{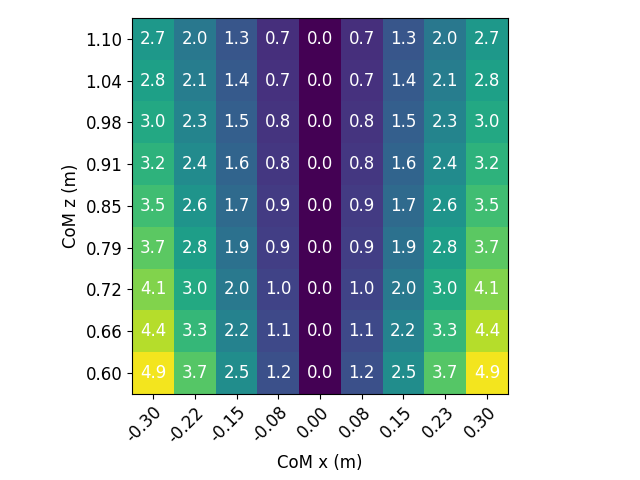}
\label{fig:com_dynamics_magnitude_field_iter1}}

\subfloat[Optimal model \ref{case:optimal_rom}. 2D slice at CoM position = 0m in y axis (when the CoM is right above the stance foot).]{
\includegraphics[width=1\linewidth]{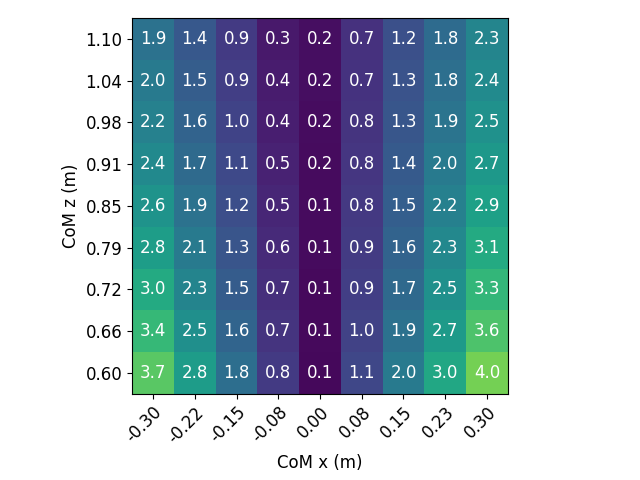}
\label{fig:com_dynamics_magnitude_field_iter400}}               
		
\vspace*{2mm}
\caption{The magnitude of ROM dynamics $g$ over the CoM x and z position. 
The settings of Fig. \ref{fig:com_dynamics_magnitude_field_iter1} and \ref{fig:com_dynamics_magnitude_field_iter400} are the same as Fig. \ref{fig:com_dynamics_vector_field_iter1} and \ref{fig:com_dynamics_vector_field_iter400}, respectively.
The magnitude plots show that the optimal model has smaller CoM accelerations, which implies smaller ground reaction forces, particularly in the x axis (implied by the vector fields in Fig. \ref{fig:com_dynamics_vector_field}).
We observed in the experiments that these force vectors align more closely with the normal direction of the ground.
}
\label{fig:com_dynamics_magnitude}
\end{figure}

In Fig. \ref{fig:cost_landscape_comparisons}, the dashed-line boxes represent the training task space used in the model optimization stage.
There is not a strong correlation between the cost ratio and the training space.
However, we can observe that the performance of the optimal ROM generalizes well to the unseen task (region outside the dashed-line box).
For example, the cost reduction ratio stays around 16\% at different turning rates in simulation.

Lastly, we designed a track shown in Fig. \ref{fig:track} for Cassie to finish as fast as possible to showcase the capability of the optimal ROM.
The track includes various segments requiring Cassie to turn by different angles and walk on different sloped grounds.
To enable Cassie to race through the track, we implemented a high-level path-following controller that sends commands (such as walking velocity) to the MPC.
We tuned the \eedit{path-following } controller parameters, so that Cassie can finish many segments as fast as possible without falling off the track. 
\eedit{The parameters were tuned separately for each model. 
We observe that, during high-speed walking, the optimal model can track the commanded velocity better than the LIP model, 
and that the actual top speed is achieved understand different commanded velocities. 
Quantitatively, Cassie on average can walk about 40\% faster with the optimal ROM, shown in Table \ref{table:time_breakdown}. }
This speed improvement reflected the periodic-walking result shown in the cost landscape plots in Fig. \ref{fig:cost_landscape_comparison_3d_task}.
Additionally, we observed that for the task of 50\% ground incline Cassie with LIP exhibited many stop-and-go motions and eventually fell, while Cassie with the optimal ROM was able to complete the incline steadily.

\subsection{Straight-line Walking on Hardware}
\label{sec:cost_comparison_for_trajopt_sim_and_hardware}

In this section, we aim to evaluate the model performance on the real robot.
We evaluate the cost for different stride lengths while fixing the pelvis height (0.95 m), and then plot the costs of both the initial and the optimal model directly in Fig. \ref{fig:cost_landscape_comparisons}. 
We also conducted the same experiment but in trajectory optimization and simulation for comparison.
Additionally, in order to maximize the controller robustness for the hardware experiment, we constrained the center of pressure (CoP) close to the foot center during the model optimization stage, although this limits the potential performance gain of the optimal ROM.

In Fig. \ref{fig:cost_landscape_comparisons}, we can see that the performance of the optimal ROM (evaluated by Cassie's joint torque squared in this case) is better than that of the initial ROM.
On hardware, the performance improvement is around 10\% for low-speed and medium-speed walking.
As a comparison, the cost improves by up to 8\% in open-loop and improves by up to 14\% in simulation using the same optimal ROM. 
We can also observe that the hardware costs are higher than those of open-loop and simulation, which might result from additional torques required to track the desired trajectories due to sim-to-real modeling errors.
Nonetheless, the improvement percentages are fairly similar across the board. 
This demonstrates that the model performance was successfully transferred to the hardware via the MPC, despite the modeling error in the full-order model.

\subsection{Optimal Robot Behaviors}\label{sec:optimal_robot_behavior}

In order to understand the source of the performance improvement, we look at the motion of the robot and the center of mass dynamics (the ROM dynamics $g$).
The discussions in this section are based on the straight-line walking experiments in Section \ref{sec:cost_comparison_for_trajopt_sim_and_hardware}.

We observe in the full-model trajectory optimization that the average CoP stays at the center of the support polygon when we use LIP on Cassie \ref{case:initial_rom}.
In contrast, the CoP moves toward the rear end of the support polygon where there is no ROM embedding \ref{case:no_rom}.
Interestingly, this CoP shift emerges when using the optimal model \ref{case:optimal_rom} in the open-loop trajectory, 
and we observe the same behavior in both simulation and hardware experiment.
On hardware, we confirm this by visualizing the projected CoM on the ground when Cassie walks in place.
The projected CoM is close to CoP, since there is little centroidal angular momentum for walking in place. 
The projected CoM of the hardware data indeed shifts towards the back of the support polygon when using the optimal model.

To understand why the projected CoM moves backward, we plot the ROM dynamic function $g$ in Fig. \ref{fig:com_dynamics_vector_field} for both the initial model and the optimal model. 
In the case of the initial model (LIP), we know that the dynamics should be symmetrical about the z axis, specifically the acceleration should be 0 at x = 0 (Fig. \ref{fig:com_dynamics_vector_field_iter1}).
This vector field profile, however, looks different in the case of the optimal model.
As we can see from Fig. \ref{fig:com_dynamics_vector_field_iter400}, the area with near-zero acceleration shifts towards the -x direction (i.e. to the back of the support polygon), and interestingly it also slightly correlates to the height of the CoM.
The higher the CoM is, the further back the region of zero acceleration is.
Additionally, we know that the LIP dynamics in the x-z plane is independent of the CoM y position.
That is, no matter which slice of the x-z plane we take, the vector field should look identical.
For the optimal model, the dynamics in the x-z plane is a function of the CoM y position.
The further away the CoM is from the stance foot (bigger foot spread), the further back the zero acceleration region is.

Aside from the vector field plots of the CoM dynamics, we also visualize the absolute value of the vectors in Fig. \ref{fig:com_dynamics_magnitude}. 
We can see that the magnitude of the CoM acceleration in general becomes smaller when using the optimal model.
This implies that the total ground reaction force is smaller with the optimal model, even if the robot walks at the same speed.
Given the same walking speed, the robot decelerates and accelerates less in the x axis when using the optimal model (i.e., the average speed is the same, but the speed fluctuation becomes smaller after using the optimal model).
We hypothesize that the decrease in ground force magnitude partially contributes to the decrease in the joint torque in the case of Cassie walking. 
The fact that there is less work done on the CoM during walking with the optimal ROM might have led to the decrease in torque squared which is a proxy for energy consumption.

The experiments in Section \ref{sec:closed_loop_eval} demonstrate two things. 
First, the optimal behavior and the performance are transferred from the open-loop training (left side of Fig. \ref{fig:outline}) to a closed-loop system (right side of Fig. \ref{fig:outline}).
Second, the optimal reduced-order model improves the real Cassie's performance, while the low dimensionality permits a real time planning application.

\edit{

}

\section{MPC for a General ROM}\label{sec:mpc_generic_rom}

\edit{

Sections \ref{sec:mpc} and \ref{sec:closed_loop_eval} limit the ROMs to a predefined embedding function $r$ which simplifies the planner and enables real time planning results. 
In this section, we present an MPC formulation for a general ROM, 
where full-order states at swing foot touchdown events were used to provide physical meaning to the resulting plan.
Given an ongoing steady improvement in computational and algorithmic speed, we believe this general MPC will soon be solvable in real time on hardware. 

}

\subsection{Hybrid Nature of the Robot Dynamics}\label{sec:hybrid_discussion}

Shown in Eq. \eqref{eq:hybrid_eom}, the dynamics of the full model is hybrid -- it contains both the continuous-time dynamics and discrete-time dynamics due to the foot collision.
In contrast, many existing reduced-order models assume \edit{zero ground impacts }at foot touchdown.
This is partially due to the fact that the exact embedding of a reduced-order discrete dynamics does not always exist.
For example, we could have two pre-impact states $x$ of the full model that correspond to the same reduced-order states, but the post-impact states of the full model map to two different reduced-order states.
In this case, the reduced-order discrete dynamics is not an ordinary (single-valued) function.
Therefore, in order to capture the exact full impact dynamics in the planner, it is necessary to mix the reduced-order model with the discrete dynamics from the full-order model.  
We note that the traditional approaches to reduced-order planning and embedding must also grapple with approximations of the impact event.

In addition to the issue above, the mix of reduced- and full-order models also seems necessary if we do not retain the physical interpretability of the embedding $r$ when planning for the optimal footstep locations in the planner.
This results in a low-dimensional trajectory optimization problem, a search for $y_j(t)$ and $\tau_j(t)$, with additional decision variables $x_{-,j},x_{+,j}$, representing the pre- and post-impact full-order states.
The index $j$ refers to the $j$-th stride. 
The constraints relating the reduced-order state to the full-order model and the impact dynamics are \vspace{-2mm}
\begin{equation} \label{eq:hybrid_constraint}
\begin{aligned}
y_j(t_j) = r(q_{-,j}; \theta_e)&, & \dot{y}_j(t_j) =\frac{\partial r (q_{-,j};\theta_e)}{\partial q_{-,j}} \dot{q}_{-,j},\\
y_{j+1}(t_j) = r(q_{+,j}; \theta_e)&, & \dot{y}_{j+1}(t_j) =\frac{\partial r (q_{+,j};\theta_e)}{\partial q_{+,j}} \dot{q}_{+,j},\\
\text{and } & &C_{hybrid}(x_{-,j}, x_{+,j},\Lambda_j) \leq 0,
\end{aligned}
\end{equation}
where $t_j$ is the impact time (ending the $j$-th stride), $C_{hybrid}$ represents the hybrid guard $S$ and the impact mapping $\Delta$ without left-right leg alternation\footnote{The impact mapping $\Delta$ can be simplified to identity if we assume no impact (i.e. swing foot touchdown velocity is 0 in the vertical axis).} \cite{westervelt2003hybrid}.

\subsection{Planning with ROM and Full-order Impact Dynamics}\label{sec:plan_with_rom_and_fom}

Similar to Section \ref{sec:mpc}, we formulate a reduced-order trajectory optimization problem to walk $n_s$ strides.
However, we replace the discrete footstep inputs $\Tau_{fp}$ with the full robot states $x_{-,j}$ and $x_{+,j}$.
\edit{
The key difference between the planning problems in Section \ref{sec:mpc} and this section is that here we introduce new variables 
\begin{itemize}
    \item full-order states $x_{-,j}$, $x_{+,j}$ and ground impulses $\Lambda_j$ (to capture the exact full-order impact dynamics), and 
    \item the ROM’s continuous-time input $\tau$.
\end{itemize}
}
To improve readability, we stack decision variables into bigger vectors $\Tau=[\tau_1, ..., \tau_n] \in \mathbb{R}^{n_\tau n}$ and $X=[x_{-,1},...,x_{-,n_s}, x_{+,1},...,x_{+,n_s}] \in \mathbb{R}^{2n_x n_s}$.

Costs are nominally expressed in terms of $[y, \dot{y}]$ and $\tau$, though the pre- and post-impact full-order states can also be used to represent goal locations.
In addition to the constraints in Eq. \eqref{eq:hybrid_constraint}, we impose constraints $C_{kinematics} (X) \leq 0$ on the full model's kinematics such that the solution obeys joint limits, stance foot stays fixed during the stance phase, and legs do not collide with each other.

The planning problem with the general ROM is 
\begin{equation}\label{eq:rom_planning_trajopt_mixed_with_FoM}
    \begin{array}{cl}
     \underset{w}{\text{min}} & \| \Tau \|^2_{W_\Tau} +  \| Z - Z_{reg} \|^2_{W_Z} +  \| X - X_{reg} \|^2_{W_X}  \\
     \text{s.t.} & \text{Reduced-order dynamics (Eq. \eqref{eq:model_dyn})}, \\
      & \text{Hybrid constraints (Eq. \eqref{eq:hybrid_constraint}}),\\
      & C_{kinematics} (X) \leq 0, \\
      & x_0 = \text{current feedback full-order state},  \\
    \end{array}
\end{equation}
where $w = [Z, \Tau, x_0, X, \Lambda_1, ..., \Lambda_{n_s}] \in \mathbb{R}^{n_w}$, $W_T$, $W_Z$ and $W_X$ are the weights of the norms, $Z_{reg}$ is the regularization state for the reduced model, and $X_{reg}$ is the regularization state for the full state and contains the goal location of the robot.
After solving Eq. \eqref{eq:rom_planning_trajopt_mixed_with_FoM}, we reconstruct the desired ROM trajectory $y_d(t)$ from the optimal solution $Z^*$. 
Different from Section \ref{sec:mpc}, the optimal solution $w^*$ here also contains the desired full-order states $X^*$ at impact events, from which we derive not only the desired swing foot stepping locations but also the desired trajectories for joints such as swing hip yaw and swing toe joint.
Additionally, since there are full states in the planner, we can send the turning rate command directly to the ROM planner.

Fig. \ref{fig:rom_planning} visualizes the pre-impact states in the case where the robot walks two meters with four strides, connected by the hybrid events and continuous low-dimensional trajectories $y_j(t)$. 
Although there is no guarantee that the planned trajectories $y_j(t)$ are feasible for the full model except those at the hybrid events, we were able to retrieve $q(t)$ from $y_j(t)$ through inverse kinematics, meaning the embedding existed empirically.
We note that classical models like LIP also provide no guarantees \cite{iqbal2022drs}.
For instance, there is no constraint on leg lengths in the ROM which could lead to kinematic infeasibility.
The formulation in Eq. \eqref{eq:rom_planning_trajopt_mixed_with_FoM} preserves an exact representation of the hybrid dynamics, but results in a significantly reduced optimization problem.

\begin{figure}[t!]
 \centering
 \includegraphics[width=0.9\linewidth]{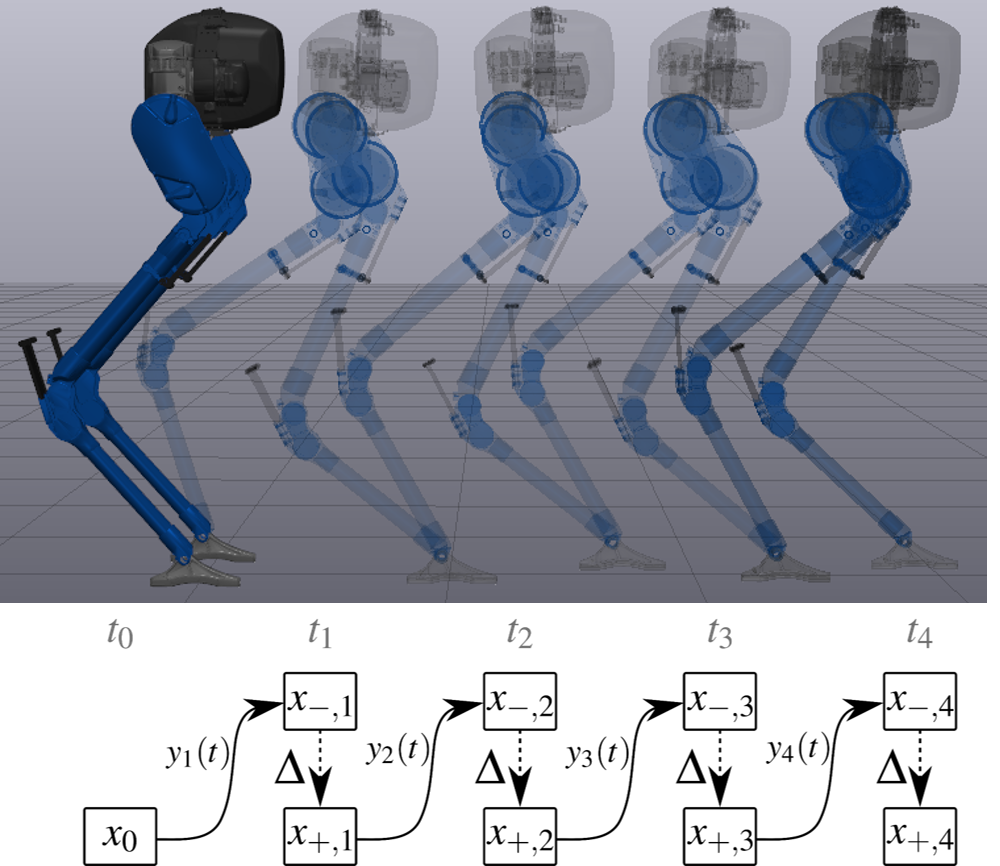}
 \caption{An example of the real time planner in Eq. \eqref{eq:rom_planning_trajopt_mixed_with_FoM}. Given a task of covering two meters in four steps starting from a standing pose, we rapidly plan a trajectory for the reduced-order model. The high-dimensional model is used to capture the hybrid event at stepping, as illustrated in the diagram.}
 \label{fig:rom_planning}
\end{figure}

\subsection{Implementation and Experiments}

We implement the MPC using Eq. \eqref{eq:rom_planning_trajopt_mixed_with_FoM} for both simulation and hardware experiments (the hardware setup is the same as Section \ref{sec:hardware_setup_and_solve_time}). 
In simulation, we were able to transfer the open-loop performance to closed-loop performance with this new MPC. 
However, on hardware, the off-the-shelf solvers IPOPT and SNOPT were not capable of solving the planning problem in Eq. \eqref{eq:rom_planning_trajopt_mixed_with_FoM} fast enough or well enough to enable a high-performance real time MPC.
With IPOPT, the planner simply did not run fast enough.
With SNOPT, even though the solve time can be decreased down to 30ms with loose optimality tolerance and constraint tolerance, we sacrificed the solution quality too much to achieve big stride length on Cassie.
\edit{
Nonetheless, we believe that the general MPC will soon be solvable within reasonable time constraints on hardware, as computer technology advances steadily.
Additionally, a well-engineered custom solver can also help enable real time planning.
Boston Dynamics has shown their success in the nonlinear MPC solver with the centroidal momentum model and full model configurations\footnote{
The centroidal momentum model \cite{orin2008centroidal, wensing2016improved} describes the actual dynamics without imposing constraints on the full model.
However, this momentum model lacks positional representations, thus requiring the incorporation of full model configurations in planning.
} \cite{BostonDynamicsTalkAtWorkship}.
}

\section{Discussion}\label{sec:discussion}

\subsection{Model Parameterization}\label{sec:parameterization_discussion}

\subsubsection{Trade-off between planning speed and performance}

\eedit{
A fundamental trade-off exists between planning speed and model performance \cite{li2021model, norby2022adaptive}.
As the model (i.e. $r$ and $g$) becomes more expressive, we see slower planning speeds and better performances.
This model expressiveness increases as we 
\begin{itemize}
\item increase the dimension of the model,
\item increase the number of basis functions (or number of neurons in a neural network), or
\item use universal function approximators such as monomial basis functions (polynomials) instead of physically-interpretable functions (particular set of functions).
\end{itemize}
Additionally, under the same model expressiveness, we observe a trade-off between the task space size and the model performance.
Larger task spaces require the model to be less specialized, which might result in a lower average performance.
}

\eedit{
\subsubsection{Linear models (with linear basis functions)}
We found that, for some choice of task space, linear models 
perform almost as well as the full model. 
}
This linear reduced-order dynamics transforms the MPC in Eq. \eqref{eq:rom_planning_trajopt} into a quadratic optimization problem, allowing for rapid planning.
This linear model also renders a closed-form solution and makes it suitable for existing techniques in robust control design and stability analysis.
For challenging or complex task spaces, linear basis functions sacrifice significant performance when compared with those of higher degree. 
We emphasize that our method can be used to optimize models of any chosen degree, and leave such selection to the practitioner.

\subsubsection{Alternative basis functions}
Beside different orders of monomials, we also experimented with trigonometric monomials (e.g. $\text{sin}^a(x) \text{cos}^b(x)$ where $a, b \in \mathbb{N}$).
However, we found no notable difference with this basis set.
Since quadratic basis approaches optimal performance in model optimization in Section \ref{sec:model_optimization_examples}, 
we leave a broader exploration of choices of basis functions as a possible future work.

\subsubsection{Physical Interpretability of ROMs}

Classical ROMs often maintain some level of physical interperability, because they are built from mechanical components like springs and masses. 
Our approach, which uses more general representations, does sacrifice this connection to human intuition. 
However, we have found it beneficial to manually select the embedding function $r$. 
This has the benefit of ensuring that the reduced-order state remains human interpretable, which is  useful for specifying objectives for planning and control in Section \ref{sec:mpc}. 
One might also imagine restricting the space of ROM dynamics $g$ to maintain physical connections (e.g. specifying nonlinear or velocity-dependent springs, inspired by human studies \cite{pequera2023reducing}), 
though we leave this to future work.

\subsection{Performance Gap Between Open-loop and Closed-loop}\label{sec:performance_gap}

The proposed approach to model optimization uses full-model trajectory optimization. 
This has a few advantages. 
First,  it allows us to embed the reduced-order model into the full model exactly via constraints. 
Second, it is more sample efficient than the approaches in reinforcement learning (such as \cite{pandala2022robust}).
However, using trajectory optimization leaves a potential performance gap between the offline training and online deployment, because trajectory optimization is an open-loop optimal control method which does not consider any controller heuristics by default.
For instance, our walking controller constructs the swing foot trajectory with cubic splines, and we found the open-loop performance can be transferred to the closed-loop better after we add this cubic spline heuristic as a constraint in the trajectory optimization. 

While we have seen the performance of the robot improve, we also observed that the solver would exploit any degree of freedom of the input and state variables during the model training stage.
For example, the CoP turned out to play an important role in improving the performance (reducing joint torques) of Cassie in Section \ref{sec:closed_loop_eval}.
If we had not regularized the CoP (or the CoM motion) during the model optimization stage, the solver would have moved the CoP all the way to the edge of the support polygon.
Although this exploitation can potentially lead to a much bigger cost improvement,
it also hurts the robustness.
Under sensor noises and model uncertainty, tracking the planned trajectories of this optimal model cannot stabilize the robot well, and thus the performance cannot be transferred to the hardware. 
One principled way of fixing this issue is to optimize the robustness of the trajectory alongside the user-specified cost function \cite{Dai12a, zhu2022hybrid}.

For the general optimal ROM MPC in Section \ref{sec:mpc_generic_rom}, one place where the performance gap can enter is the choice of the cost function in Eq. \eqref{eq:rom_planning_trajopt_mixed_with_FoM}.
In our past experiments, we simply ran a full model trajectory optimization and used this optimal trajectory for the regularization term in Eq. \eqref{eq:rom_planning_trajopt_mixed_with_FoM}.
It worked well, although there was a slight improvement drop. 
To mitigate the gap, one could try inverse optimal control to learn the MPC cost function given data from the full model trajectory optimization.

Since our robot has one-DOF underactuation caused by line feet design, it is not trivial to track the desired trajectory of the ROM within the continuous phase of hybrid dynamics.
We noticed in the experiment that there was a noticeable performance difference between whether or not we tracked the first element of the desired ROM trajectory. 
Therefore, we conjecture that if we use a robot with a finite size of feet, we could translate the open-loop performance to the closed-loop performance better and more easily. 

Lastly, we found that empirically it is easier to transfer model performance from open-loop to closed-loop when we fix the embedding function $r$, although the open-loop performance improvement is usually much bigger (sometimes near full-model's performance) when we optimize both the embedding function $r$ and dynamics $g$.
As a concrete example, in some model optimization instances the optimal ROM position $y$ can be insensitive to the change of CoM position, which makes it difficult to servo the CoM height. 
In the worst case, this insensitivity could lead to substantial CoM height movement and instability of the closed-loop system. 

\vspace{-2mm}

\subsection{Limitations of Our Framework} \label{sec:limitation}
In our bilevel optimization approach, the initial ROM must be feasible for the inner-level trajectory optimization to obtain a meaningful gradient for the outer-level optimization.
This means that we must initialize the ROM to one capable of walking, potentially limiting our ability to use stochastic initialization to explore the entire ROM space.
Despite this potential drawback, we note that random task sampling in Algorithm \ref{alg:model_opt} can help escape certain local minima (effectiveness depends on the cost landscape of the model optimization).
Future work could explore the role of this initialization, for instance by evaluating performance when starting from multiple existing hand-designed ROMs.

Our approach requires the user to determine the dimension of the ROM. 
Increasing the dimension theoretically strictly improves model performance, at the cost of MPC computational speed. 
As a result, this defines a Pareto optimal front, without a simple way to automatically determine the dimension. 
That said, there are recent works which attempt to select between models of varying complexity \cite{li2021model, khazoom2023optimal, norby2022adaptive}, 
which we believe might be applied to our framework.

\vspace{-2mm}

\subsection{Generality of Our Framework}

\edit{
This paper focuses on applying the optimal ROMs to the hardware Cassie, 
but throughout the project we observe that LIP performs reasonable well for Cassie, 
particularly over relatively simple task domains such as straight-line walking. 
We hypothesize that this is due, in part, to the fact that Cassie's legs are relatively light. 
As an experiment, we investigated the effect of foot weight on the performance improvement.
When increasing the foot's mass to 4kg (the robot weighs 40kg in total), 
we observed that the LIP cost relative to the full model's increased from 1.3 to 1.8 and offered a greater room for improvement, resulting in 40\% torque cost reduction for tasks similar to Example \exDefault's.
Beside this investigation, we also saw more than 75\% of cost reduction for the five-link planar robot in our prior work \cite{chen2020optimal}. }

Furthermore, the proposed framework is agnostic to types of robots and tasks \edit{(e.g. quadrupeds and dexterous manipulators). }
This has implications all over robotics, given the need for computational efficiency and the prevalence of reduced-order models in locomotion and manipulation.

\vspace{-1mm}

\section{Conclusion}\label{sec:conclusion}
In this work, we directly optimized the reduced-order models which can be used in an online planner that achieved performance higher than that of the traditional physical models. 
We formulated a bilevel optimization problem and presented an efficient algorithm that leverages the problem structure.
Examples showed improvements up to 38\% depending on the task difficulty and the performance metric.
The optimal reduced-order models are more permissive and capable of higher performance, while remaining low dimensional.
We also designed two MPCs for the optimal reduced-order models which enable Cassie to accomplish tasks with better performance.
In the hardware experiment, the optimal ROM showed 10\% of improvement on Cassie, and we investigated the source of performance gains for this particular model.
We demonstrated that the use of ROM greatly reduces planning time, and that the optimized ROM improves the performance of the robot beyond the traditional ROMs.

Although the model optimization approach presented in this paper has the advantage of optimizing models agnostic to low-level controllers, it does not guarantee that the performance improvements from these optimal models can be transferred to the robot via a feedback controller as discussed in Section \ref{sec:performance_gap}.
One ongoing work is to fix the above issue by optimizing the model in a closed-loop fashion, so that the model optimization accounts for the controller heuristics and maintains the closed-loop stability. 
This would also potentially ease the process of realizing the optimal model performance on hardware.
Additionally, discussed in Section \ref{sec:hybrid_discussion}, an approximation is necessary if we were to find a low dimensional representation of the full-order impact dynamics.
In this work, we only circumvented the hybrid problem by either using a physically-interpretable ROM or mixing the full impact dynamics with the ROM. 
Finding an optimal low-dimensional discrete dynamics for a robot still remains an open question.

\section*{Acknowledgements}
We thank Wanxin Jin for discussions on Envelope Theorem and approaches to differentiating an optimization problem. Toyota Research Institute provided funds to support this work.

\appendices

\section{Heuristics in trajectory optimization} \label{sec:heuristics_in_trajopt}

Solving the trajectory optimization problem in Eq. \eqref{eq:disc_trajopt} or \eqref{eq:model_trajopt} for a high-dimensional robot is hard, since the problem is nonlinear and of large scale.
Even although there are off-the-shelf solvers such as IPOPT \cite{wachter2006implementation} and SNOPT \cite{gill2005snopt} designed to solve large-scale nonlinear optimization problem, it is often impossible to get a good optimal solution without any heuristics\edit{, since there are many local optima. }
In this section, we will talk from our experience about the heuristics that might help to solve the problem faster and also \edit{find a solution with a lower cost and closer to the global optimum.
That said, we have no objective manner in which to assess proximity to global optimality, and thus this is a purely observational criterion.
}

Let the nonlinear problem be 
\vspace{-1mm}
\begin{equation}\label{eq:large_scale_nlp}
    \begin{array}{cl}
     \underset{w}{\text{min}} & \tilde{h}(w)  \\
     \text{s.t.} & \tilde{f}(w) \leq 0\\
    \end{array}
\vspace{-2mm}
\end{equation}
where $w$ contains all decision variables, $\tilde{h}$ is the cost function, and $\tilde{f}$ is the constraint function. 
It turned out that scaling either $w$, $\tilde{h}$ or $\tilde{f}$ could help to improve the condition of the problem.

\begin{itemize}
  \item $w$: 
  Sometimes the decision variables are in different units and can take values of different orders. 
For example, joint angles of Cassie are roughly less than $1$ (rad), while its contact forces are usually larger than $100$ (N).
In this case, we can scale $w$ by some factor $s$, such that the decision variables of the new problem are $w_{scaled,i} = s_i w_i$ for $i = 1, 2, ..., n_w$. 
After the problem is solved, we scale the optimal solution of the new problem back by $w^*_i= \frac{1}{s_i} w^{*}_{scaled,i}$.

  \item $\tilde{f}$:
The constraints $\tilde{f}$ can take various units just like $w$. 
Similarly, we can scale each constraint individually.
Note that scaling constraints affects how well the original constraints are satisfied, so one should make sure that the constraint tolerance is still meaningful.   
  
  \item $\tilde{h}$:
  In theory, scaling the cost does not affect the optimal solution.
However, it does matter in the solver's algorithm.
It is desirable to scale the cost so that it is not larger than 1 around the area of interest.
\end{itemize}
For more detail about scaling, we refer the readers to Chapter 8.4 and Chapter 8.7 of \cite{gill1981practical}.
In addition to scaling the problem, the following heuristics could also be helpful:

\begin{itemize}
\item Provide the solver with a good initial guess.
\item Add small randomness to the initial guess: 
This helps to avoid singularities.
\item Add regularization terms to the cost function:
This could remove local minima in the cost landscape and can also speed up the solve time.
Adding regularization terms is similar to the traditional reward shaping of Reinforcement Learning \cite{hu2020learning} and the policy-regularized MPC \cite{bledt2017policy}.
\item Add intermediate variables (also called slack variables \cite{hereid2017frost}): 
This can sometimes improve the condition number of the constraint gradients with respect to decision variables.
One example of this is reformulating the trajectory optimization problem based on the single shooting method into that based on the multiple shooting method by introducing state variables \cite{betts1998survey}.
\item Use solver's internal scaling option:
In the case of SNOPT \cite{philip2015user}, we found setting \emph{Scale option} to 2 helps to find an optimal solution of better quality.
Note that this option increases the solve time and demands a good initial guess to the problem.
\end{itemize}

\section{Mirrored Reduced-order Model} \label{sec:mirror_rom_appendix}

\subsection{Model definition}

The model representation in Eq. \eqref{eq:model} could be dependent on the side of the robot.
For example, when using the LIP model as the ROM, we might choose the generalized position of the model $y$ to be the CoM position relative to the left foot (instead of the right foot). 
In this case, we need to find the reduced-order model for the right support phase of the robot.
Fortunately, we can derive this ROM by mirroring the robot configuration $q$ about its sagittal plane (Fig. \ref{fig:mirror_robot}) and reusing the ROM of the left support phase.
We refer to this new ROM as the \emph{mirrored reduced-order model}.

Let $q_m$ and $v_m$ be the generalized position and velocity of the ``mirrored robot", shown in Fig. \ref{fig:mirror_robot}. 
Mathematically, the mirrored ROM $\mu_m$ is 
\vspace{-2mm}
\begin{equation}\label{eq:mirrored_model_def}
\begin{aligned}
\mu_m & \triangleq (r_m,g),
\end{aligned}
\vspace{-2mm}
\end{equation}
with 
\vspace{-2mm}
\begin{subequations}\label{eq:mirrored_model}
\begin{align}
y_m&= r_m(q) = r(q_m) = r(M(q)) ,  \label{eq:mirrored_model_kin}\\
\ddot{y}_m&= g(y_m,\dot{y}_m,\tau), \label{eq:mirrored_model_dyn}
\end{align}
\end{subequations}
where $r_m$ is the embedding function of the mirror model, and $M$ is the mirror function such that $q_m = M(q)$ and $q = M(q_m)$. 
We note that the two models, in Eq.  \eqref{eq:model_def} and  \eqref{eq:mirrored_model_def}, share the same dynamics function $g$.

\subsection{Time derivatives of the embedding function}

Feedback control around a desired trajectory often requires the first and the second time derivatives information.
Here, we derive these quantities for the mirrored ROM in terms of the original embedding function $r$ in Eq. \eqref{eq:model_kin} and its derivatives.

\begin{figure}[t!]
 \centering
 \includegraphics[width=0.8\linewidth]{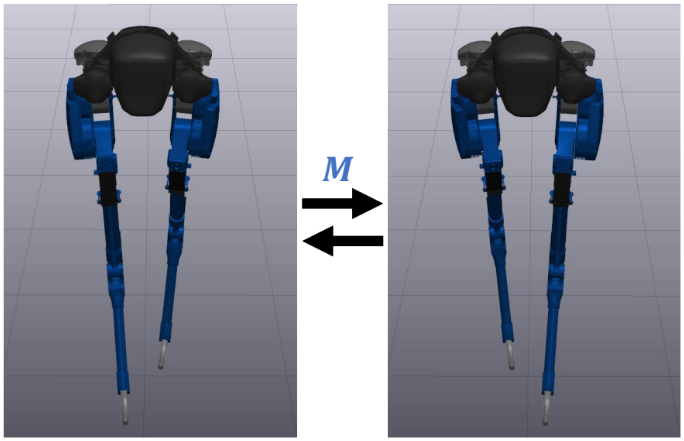}
 \vspace{-2mm}
 \caption{The mirror function $M$ mirrors the robot configuration about the sagittal plane. This function is necessary in planning and control when the embedding function (Eq. \eqref{eq:model_kin}) of the reduced-order model only represents one side of the robot. For example, the embedding function of the LIP model was chosen to be the CoM relative to the left foot (and not the right foot).}
 \label{fig:mirror_robot}
\end{figure}

\subsubsection{$\dot{y}_m$}

Let $J_m$ be the Jacobian of the mirrored model embedding $r_m$ with respect to the robot configuration $q$, such that 
\begin{equation}\label{eq:J_m_def}
\dot{y}_m = J_m \dot{q}.
\end{equation}
The time derivatives of $y_m$ is
\begin{equation}\label{eq:qdot_m}
\dot{y}_m =  \frac{\partial r(M(q))}{\partial M(q)} \frac{\partial M(q)}{\partial q} \dot{q} = J(q_m) \frac{\partial M(q)}{\partial q} \dot{q}  = J(q_m) \dot{q}_m
\end{equation}
where $J(q_m)$ is the Jacobian of the original model embedding $r$ evaluated with $q_m$.  
From Eq. \eqref{eq:J_m_def} and \eqref{eq:qdot_m}, we derive
\begin{equation} \label{eq:J_m}
J_m = J(q_m) \frac{\partial M(q)}{\partial q} 
\vspace{-2mm}
\end{equation}
where $\frac{\partial M(q)}{\partial q}$ is a matrix which contains only 0, 1, and -1.

\subsubsection{$\ddot{y}_m$}

The $i$-th element of $\ddot{y}_m$ is
\begin{equation*}
\begin{aligned}
\ddot{y}_{m,i}  & = \frac{d}{d t} \left( \frac{\partial r_i(q_m)}{\partial q_m} \dot{q}_m \right) 
= \frac{d}{d t} \sum_j \frac{\partial r_i(q_{m})}{\partial q_{m,j}} \dot{q}_{m,j}  \\
& = \sum_j \frac{d}{d t} \left( \frac{\partial r_i(q_m)}{\partial q_{m,j}} \right) \dot{q}_{m,j} + \sum_j \frac{\partial r_i(q_m)}{\partial q_{m,j}} \frac{d}{d t} \left( \dot{q}_{m,j} \right)  \\
& = \sum_{jk}   \frac{\partial^2 r_i(q_m)}{\partial q_{m,j}\partial q_{m,k}}  \dot{q}_{m,j} \dot{q}_{m,k} + \frac{\partial r_i(q_m)}{\partial q_{m}} \frac{d}{d t} \left( \dot{q}_{m} \right).  
\end{aligned}
\end{equation*}
where $r_i$ is the $i$-th element of the embedding function. The above equation can be expressed in the vector-matrix form
\begin{equation}\label{eq:qddot_m}
\begin{aligned}
\ddot{y}_{m}  
& = \dot{q}_{m}^T \nabla^2 r(q_m) \dot{q}_m + J(q_m) \ddot{q}_m \\
& = \dot{J}(q_m, v_m) \dot{q}_m + J(q_m)\ddot{q}_m \\
& = \dot{J}(q_m, v_m) \dot{q}_m + J_m \dot{v} \ \ \ \ (\because \text{Eq. \eqref{eq:J_m}})
\end{aligned}
\end{equation}
where $\dot{J}(q_m, v_m)$ is the time derivatives of the $J$ (of the original model) evaluated with the mirrored position $q_m$ and velocity $v_m$.

\bibliographystyle{ieeetr}
\bibliography{library,yuming_library}

\begin{IEEEbiography}[{\includegraphics[width=1in,height=1.25in,keepaspectratio]{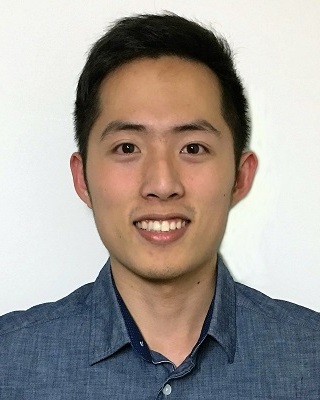}}]
{Yu-Ming Chen} is currently an applied scientist at The AI Institute. 
His research focuses on reduced-order modeling of legged robots with applications in real time planning and control. 
He holds a B.Sc. in Physics from National Taiwan University (2015), an M.Sc. in Robotics from the University of Michigan, Ann Arbor (2018) and a Ph.D. in Electrical and Systems Engineering from the University of Pennsylvania (2023).
\end{IEEEbiography}

\begin{IEEEbiography}[{\includegraphics[width=1in,height=1.25in,keepaspectratio]{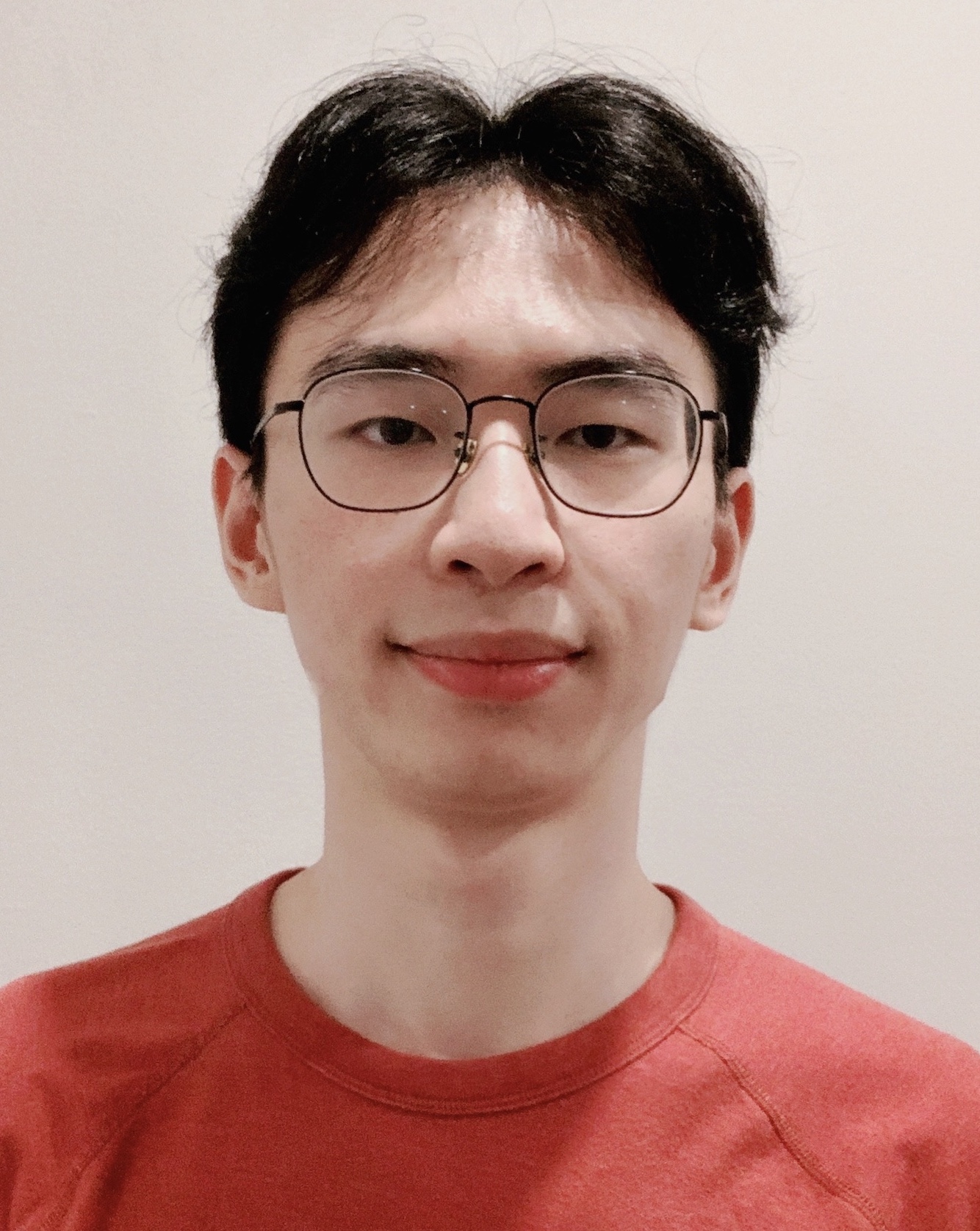}}]
{Jianshu Hu} is currently a PhD student in UM-SJTU Joint Institute, Shanghai Jiao Tong University. He is working on Deep Reinforcement Learning supervised by Professor Yutong Ban and Professor Paul Weng. Aiming at building an intelligent robot, he is focusing on designing sample-efficient Reinforcement Learning algorithms leveraging invariance/equivariance and data augmentation. Please check the personal website (https://jianshu-hu.github.io) for more details about his research.
\end{IEEEbiography}

\begin{IEEEbiography}[{\includegraphics[width=1in,height=1.25in,keepaspectratio]{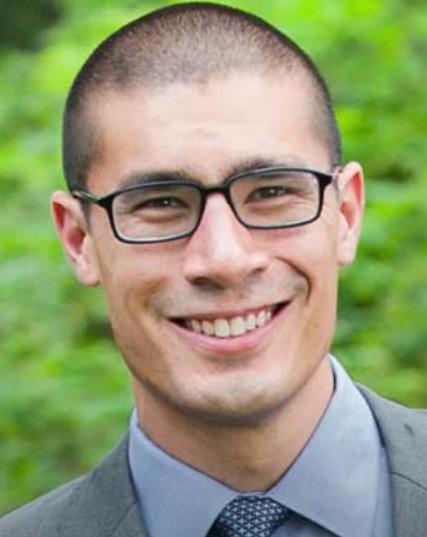}}]
{Michael Posa} (Member, IEEE) received the B.S. and M.S. degrees in mechanical engineering from Stanford University, Stanford, CA, USA, in 2007 and 2008, respectively, and the Ph.D. degree in electrical engineering and computer science from the Massachusetts Institute of Technology, Cambridge, MA, USA, in 2017.,He is currently an Assistant Professor of mechanical engineering and applied mechanics with the University of Pennsylvania, Philadelphia, PA, USA, where he is a Member of the General Robotics, Automation, Sensing and Perception (GRASP) Lab. He holds secondary appointments in electrical and systems engineering and in computer and information science. He leads the Dynamic Autonomy and Intelligent Robotics Lab, University of Pennsylvania, which focuses on developing computationally tractable algorithms to enable robots to operate both dynamically and safely as they maneuver through and interact with their environments, with applications including legged locomotion and manipulation.,Dr. Posa was a recipient of multiple awards, including the NSF CAREER Award, RSS Early Career Spotlight Award, and Best Paper Awards. He is an Associate Editor for IEEE Transactions on Robotics.
\end{IEEEbiography}

\end{document}